\documentclass[letterpaper]{article} % DO NOT CHANGE THIS
\usepackage{aaai2027} % DO NOT CHANGE THIS
\nocopyright
\usepackage[hyphens]{url}  % DO NOT CHANGE THIS
\usepackage{graphicx} % DO NOT CHANGE THIS
\usepackage{natbib}  % DO NOT CHANGE THIS AND DO NOT ADD ANY OPTIONS TO IT
\usepackage{caption} % DO NOT CHANGE THIS AND DO NOT ADD ANY OPTIONS TO IT
\usepackage{algorithm}

\usepackage{enumitem}
\usepackage{threeparttable}
\usepackage{color}
\usepackage{colortbl}
\usepackage{xcolor}
\usepackage{multirow}
\usepackage{booktabs}
\usepackage{array}
\usepackage{booktabs}
\usepackage{pifont}   % 提供更美观的叉号

\newcommand{\xmark}{\ding{55}}

\usepackage{amsmath}
\usepackage{amssymb}
\usepackage{amsthm}
\usepackage{mathtools}
\newtheorem{lemma}{Lemma}
\newtheorem{proposition}{Proposition}
\theoremstyle{remark}

\usepackage{algorithm}
\usepackage{algpseudocode}

\usepackage[table]{xcolor} 
\definecolor{iceblue}{RGB}{235, 246, 255}

\usepackage{amsmath} 
\definecolor{growthgreen}{RGB}{0, 160, 0}

\usepackage{newfloat}
\usepackage{listings}
\DeclareCaptionStyle{ruled}{labelfont=normalfont,labelsep=colon,strut=off} % DO NOT CHANGE THIS
\floatstyle{ruled}
\newfloat{listing}{tb}{lst}{}
\floatname{listing}{Listing}

\usepackage{booktabs}

\title{Beyond Shortcut Learning: Hierarchical Spatial Prototype Regularization for Face  Forgery Detection}
\author{
    Chao Shuai\textsuperscript{\rm 1},
    Shaojing Fan\textsuperscript{\rm 2},
    Kun Pan\textsuperscript{\rm 1},
    Jinwu Hu\textsuperscript{\rm 3},
    Weichen Lian\textsuperscript{\rm 1},
    Ziqi Yang\textsuperscript{\rm 1},\\
    Zhongjie Ba\textsuperscript{\rm 1},
    Zhenguang Liu\textsuperscript{\rm 1}\corresponding,
}
\affiliations{
    \textsuperscript{\rm 1}State Key Laboratory of Blockchain and Data Security,
    Zhejiang University, Hangzhou, China\\
    \textsuperscript{\rm 2}National University of Singapore, Singapore\\
    \textsuperscript{\rm 3}South China University of Technology, Guangzhou, China\\
    \{chaoshuai, pankun, yangziqi, zhongjieba\}@zju.edu.cn, kevinbaylor@163.com\\
    liuzhenguang2008@gmail.com, dcsfs@nus.edu.sg, fhujinwu@gmail.com
}

\begin{document}

\maketitle

\begin{abstract}
Recent advances in generative models have enabled highly realistic face forgeries, yet generalizing deepfake detectors to unseen manipulation pipelines remains a major challenge.  One critical yet underexplored issue is that the learned forensic representation itself may fail to remain stable across different manipulation sources. In this paper, we provide new evidence that naively fine-tuned vision models suffer from severe \textit{forensic representation collapse}: their extracted cues are often entangled with unstable, dataset-specific artifacts, resulting in structurally fragile representations and poor cross-domain generalization. Leveraging this insight, we propose \textbf{H}ierarchical \textbf{S}patial \textbf{P}rototype (\textbf{HSP}) regularization, which explicitly models the forgery manifold by learning a structured prototype subspace over spatial features. We further introduce prototype orthogonality and second-order assignment balance, which jointly preserve the geometric independence of latent forgery factors and promote high-effective-rank utilization of the forensic representation, thereby reducing reliance on dataset-specific shortcuts. Empirical results demonstrate that our method substantially improves generalizability, yielding average frame-level AUC gains of 5.6\% and 5.3\% in the cross-dataset and cross-manipulation settings, respectively. Moreover, when used as a plug-in module, HSP boosts the average video-level AUC of existing state-of-the-art detectors to 95.3\% (+1.8\%). These results demonstrate the practical value of HSP as an easy-to-adopt regularization module for enhancing deepfake detectors.
% The code is available at \url{xxx}.
\end{abstract}

% Uncomment the following to link to your code, datasets, an extended version or similar.
% You must keep this block between (not within) the abstract and the main body of the paper.
% Make sure that you do not de-anonymize yourself with these links.
% \begin{links}
%     \link{Code}{https://aaai.org/example/code}
%     \link{Datasets}{https://aaai.org/example/datasets}
%     \link{Extended version}{https://aaai.org/example/extended-version}
% \end{links}

\section{Introduction}
\label{sec:introduction}
Deepfake faces are becoming increasingly photorealistic, posing growing risks to digital media security. By 2025, the number of deepfake files shared online had skyrocketed to 8 million, a 16-fold increase from 2023~\cite{deepfakes1}. Against this backdrop, deepfake detection has evolved from handcrafted cues and statistical artifact analysis to deep-learning-based detectors. Recent studies~\cite{fakeradar,fcg,deepshield,vbsat} increasingly adopt large-scale pretrained Vision Transformers~\cite{vit}, \textit{e.g.}, CLIP~\cite{clip}, achieving strong in-domain performance. However, these gains often transfer poorly to unseen manipulation pipelines, where performance degrades substantially. From a representation learning perspective, this limitation arises from shortcut learning: detectors tend to overfit volatile, dataset-specific cues rather than capturing stable forensic patterns.

Motivated by this, we shift the focus from detector design to probing the foundation model itself, and use CLIP, a widely used backbone in deepfake detection, to investigate how forgery evidence is represented. Our preliminary analysis reveals a simple yet overlooked phenomenon: replacing the global \texttt{[CLS]} representation with average pooling over spatial patch tokens improves cross-dataset detection accuracy on Celeb-DF-v2~\cite{celebdf} and DFDCP~\cite{dfdcp}, from 75.4\% and 77.8\% to 76.9\% and 78.6\%, respectively. This improvement motivates the hypothesis that forgery representations should be modeled as a structured feature space rather than compressed into a compact global representation. This hypothesis is consistent with prior work~\cite{effort}, which shows that existing detectors often suffer from low-rank feature collapse, where forgery cues are concentrated along only a few dominant directions.

We further quantitatively analyze the representation structure of a CLIP-initialized ViT-L/14~\cite{vit,clip} fine-tuned on FaceForensics++~\cite{faceforensics++}. The analysis reveals a clear generalization bottleneck: although fine-tuning enables the model to form discriminative forgery-related patterns in the feature space, the learned representation tends to concentrate on a few dominant source-specific artifacts, as detailed in Section~\ref{sec:analysis}. We term this phenomenon \emph{forensic representation collapse}, where detectors capture  a few high-confidence but narrow forgery directions that achieve strong source-domain discrimination but fail to remain stable under unseen manipulations. These observations motivate us to model forgery traces as a \emph{structured manifold} composed of multiple latent forensic factors, as illustrated in Figure~\ref{fig:intro-manifold}(b). In contrast, conventional detectors mainly enforce real--fake separability without explicitly constraining the internal organization of forgery representations, potentially causing heterogeneous forensic cues to remain entangled, as illustrated in Figure~\ref{fig:intro-manifold}(a). We argue that real-world forgeries are intrinsically multi-factorial, where different manipulation processes introduce diverse yet complementary forensic traces. This view is supported by prior forensic studies that capture various manipulation cues, including synthesis artifacts, blending boundaries, geometric inconsistencies, and post-processing traces~\cite{sbi,f3net,ict}.

Based on this intuition, we propose \textbf{Hierarchical Spatial Prototype (HSP) regularization}, a plug-in module that can be seamlessly integrated into both standard baselines and advanced ViT-based deepfake detectors. HSP models learnable prototypes as a structured basis for decomposing token-level features into latent forgery factors. Each spatial token is softly assigned to the prototypes and reconstructed from their weighted combination. Considering that real-world forgeries exhibit diverse and complementary forensic factors, we introduce two theoretically motivated regularization objectives. The \emph{prototype orthogonality} constraint preserves the geometric decoupling of learned forgery factors, preventing multiple prototypes from collapsing into the same dataset-specific dominant directions. The second-order \emph{assignment-balance} objective promotes diverse utilization of the prototype subspace, increasing the effective rank of the forensic representation and preventing token assignments from collapsing onto a limited set of dominant forgery factors. As visualized in Figure~\ref{fig:intro-manifold}(c), the features learned by HSP not only form a clear decision boundary but also exhibit a pronounced multi-factor organization within the forgery representation space. Our contributions can be summarized as follows:

\begin{itemize}
    \item We systematically analyze a critical failure mode in deepfake detection, termed \textit{forensic representation collapse}. Our empirical findings show that naively fine-tuned models are overly dependent on limited and dataset-specific forgery factors and fragile under cross-domain shifts.

     \item We propose \textbf{H}ierarchical \textbf{S}patial \textbf{P}rototype (\textbf{HSP}) regularization, a structured manifold learning framework for forgery detection. HSP introduces learnable spatial prototypes with soft token-to-prototype assignment to decompose forgery evidence into diverse latent forensic factors.

    \item We further introduce two theoretically motivated regularization objectives: prototype orthogonality prevents prototypes from collapsing toward the same source-specific directions, while assignment balance encourages diverse utilization of the prototype subspace and promotes a high-effective-rank forensic representation.
    % \item We further introduce two complementary optimization objectives: prototype orthogonality reduces redundancy among latent forensic factors, while assignment balance distributes feature variation across the prototype-assignment space. Theoretically, they drive the learned forensic representation toward the maximum effective rank permitted by soft assignments, mitigating low-rank collapse and dataset-specific shortcuts to improve cross-domain generalization.
\end{itemize}

\section{Related work}
\label{sec:related-work}
Face forgery detection is often formulated as a binary classification task, where overfitting severely limits generalization to unseen datasets. To improve generalization, early works focus on mining manipulation-specific artifacts, ranging from high-frequency anomalies~\cite{generalizing,leveraging,f3net,freqdebias} to attention-guided fine-grained features~\cite{towards,detection,multi}. Blending artifacts are investigated by Face X-Ray~\cite{face-x-ray} and Chen et al.~\cite{chen}, while~\cite{sbi,prodet,fsgd} synthesize ``pseudo-fakes'' to simulate diverse forgery patterns. Another line of research focuses on consistency analysis:~\cite{uia,dual} identify local spatial inconsistencies,~\cite{ict,id-reveal} exploit identity inconsistencies, and LipForensics~\cite{lips} captures high-level semantic irregularities in temporal mouth movements. Furthermore, some methods leverage diffusion guidance~\cite{dffake} or feature disentanglement~\cite{exposde,ucf,recce,cfm} to capture manipulation-invariant representations.

\begin{figure}[t]
    \centering
    \includegraphics[width=0.48\textwidth]{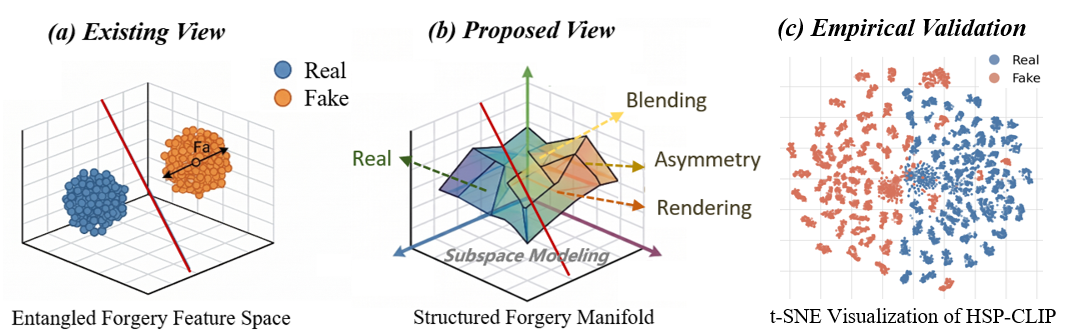}
    \caption{Structured-manifold modeling of forgery representations. (a) Existing methods separate real and fake samples within an entangled feature space. (b) Our approach instead organizes forgery traces into a structured manifold supported by multiple latent forgery factors. (c) The t-SNE visualization provides empirical support for this structured representation.}
    \label{fig:intro-manifold}
\end{figure}

Recent deepfake detection methods adopt large-scale pre-trained foundation models due to their strong representations. Existing approaches can be categorized into three paradigms: 1) Parameter-Efficient Adaptation aligns largely frozen backbones with forensic tasks using lightweight modules. Forensics Adapter~\cite{forensicsadapte} introduces learnable adapters into CLIP, while FCG~\cite{fcg} and plug-and-play methods extend this idea to video detection with efficient spatiotemporal modeling. 2) Feature Disentanglement aims to reduce semantic bias in foundation-model representations. Effort~\cite{effort} separates semantic components from forgery-relevant residuals via SVD, while UDD~\cite{udd} mitigates positional and content bias through transformer token perturbation. 3) Generalization and Explainability target open-set manipulation and interpretable detection. FakeRadar~\cite{fakeradar} models feature-space distributions to distinguish real, known-fake, and unknown-fake samples, while DeepShield~\cite{deepshield} and X2-DFD~\cite{x2dfd} explore local-global analysis and MLLM-assisted explanations, respectively. Our method differs from the existing methods in that it shifts the detection paradigm from learning a monolithic decision boundary to constructing a structured, multi-dimensional forgery subspace. HSP explicitly decomposes complex manipulation traces into independent, orthogonal dimensions, which ensures that the model captures a set of structurally invariant forgery patterns rather than dominant shortcuts.

\section{Empirical Analysis of Generalization Bottlenecks}
\label{sec:analysis}
% A deeper understanding of feature representations learned by deepfake detection models is essential for interpreting their decision mechanisms and improving cross-dataset generalization. To this end, we introduce three forensic-oriented diagnostics that quantify complementary properties of manipulation-related representations. These measures are inspired by clustering evaluation, mutual-information-based dependence analysis, and domain-invariant representation learning, but are tailored here to characterize forgery-specific feature organization.
Understanding how deepfake detectors organize forgery evidence into latent forensic factors is crucial for improving cross-dataset generalization. To this end, we treat unsupervised feature clusters as temporary proxies for latent forensic factors and introduce three diagnostics to characterize the underlying representation structure. Our experiments show that \textit{modeling multiple forgery factors, as instantiated by HSP, provides an effective means of improving the cross-domain generalizability.} A detailed analysis of these diagnostics is provided in Appendix~\ref{app:diagnostic_theory}.

\begin{table}[t]
    \centering
    \renewcommand{\arraystretch}{0.82}
    \caption{Quantitative analysis of feature representations across three models: pretrained and naively finetuned variants of CLIP ViT-L/14, alongside HSP-CLIP.}
    \resizebox{0.9\columnwidth}{!}{
        \begin{tabular}{lcccc}
            \toprule 
            \textbf{Methods} & $\mathcal{\bar H}$ & $\mathcal{I}$ & $\mathcal{P}$ & ACC \\
            \midrule 
            
            \multicolumn{5}{c}{\textbf{FaceForensics++}} \\
            \midrule
            CLIP & $0.553^{\pm 0.009}$ & $0.005^{\pm 0.002}$ & $\mathbf{0.968^{\pm 0.027}}$ & 0.854 \\
            CLIP-FT & $0.997^{\pm 0.001}$ & $0.518^{\pm 0.003}$ & $0.815^{\pm 0.146}$ & 0.931 \\
            \rowcolor{iceblue}
            HSP-CLIP-FT & $\mathbf{0.998^{\pm 0.001}}$ & $\mathbf{0.521^{\pm 0.001}}$ & $0.917^{\pm 0.115}$ & $\mathbf{0.954}$ \\ 
            \midrule
            
            \multicolumn{5}{c}{\textbf{Celeb-DF-v2}} \\
            \midrule
            CLIP & $0.612^{\pm 0.012}$ & $0.016^{\pm 0.003}$ & $0.126^{\pm 0.121}$ & 0.665 \\
            CLIP-FT & $0.996^{\pm 0.001}$ & $0.181^{\pm 0.003}$ & $0.483^{\pm 0.094}$ & 0.786 \\
            \rowcolor{iceblue}
            HSP-CLIP-FT & $\mathbf{0.998^{\pm 0.001}}$ & $\mathbf{0.234^{\pm 0.002}}$ & $\mathbf{0.832^{\pm 0.136}}$ & $\mathbf{0.828}$ \\ 
            \midrule
            
            \multicolumn{5}{c}{\textbf{DFDCP}} \\
            \midrule
            CLIP & $0.683^{\pm 0.058}$ & $0.004^{\pm 0.001}$ & $0.172^{\pm 0.107}$ & 0.547 \\
            CLIP-FT & $0.965^{\pm 0.004}$ & $0.209^{\pm 0.002}$ & $0.544^{\pm 0.136}$ & 0.779 \\
            \rowcolor{iceblue}
            HSP-CLIP-FT & $\mathbf{0.993^{\pm 0.002}}$ & $\mathbf{0.245^{\pm 0.002}}$ & $\mathbf{0.917^{\pm 0.026}}$ & $\mathbf{0.816}$ \\ 
            \midrule

            \multicolumn{5}{c}{\textbf{DFDC}} \\
            \midrule
            CLIP & $0.584^{\pm 0.004}$ & $0.003^{\pm 0.001}$ & $0.476^{\pm 0.147}$ & 0.529 \\
            CLIP-FT & $0.970^{\pm 0.004}$ & $0.175^{\pm 0.001}$ & $0.523^{\pm 0.219}$ & 0.725 \\
            \rowcolor{iceblue}
            HSP-CLIP-FT & $\mathbf{0.993^{\pm 0.003}}$ & $\mathbf{0.199^{\pm 0.001}}$ & $\mathbf{0.817^{\pm 0.096}}$ & $\mathbf{0.775}$ \\ 
            \bottomrule
        \end{tabular}
    }
    \label{tab:analysis-metrics}
\end{table}
\subsection{Feature-Centric Perspective Metrics}
We denote the dataset as $\mathcal{D}=\{(x_i,y_i)\}_{i=1}^N$, where $x_i\in X$ represents the input image space and $y_i\in Y=\{0,1\}$ denotes the binary label. Given an input image $x\in\mathbb{R}^{H\times W\times C}$, we extract non-overlapping patch embeddings $Z=f_\theta(x)\in\mathbb{R}^{L\times D}$, where $L$ and $D$ denote the number of patches and feature dimension, respectively.

\textbf{Cluster-level Fake Density $\mathcal{H}$.}
For patch-level features $z_{i,j}$, we apply $K$-means with $N_c=64$ clusters and compute the fake density of cluster $n_c$ as:

\begin{equation}
\mathcal{H}(n_c)=\frac{1}{|\mathcal{S}_{n_c}|}
\sum_{i=1}^{N}\sum_{j=1}^{L}
\mathbb{I}(y_i=1\land(i,j)\in\mathcal{S}_{n_c}),
\end{equation}
where $\mathcal{S}_{n_c}=\{(i,j)\mid \text{assign}(z_{i,j})=n_c\}$ denotes the patches assigned to cluster $n_c$, and $\mathbb{I}(\cdot)$ is the indicator function. Intuitively, $\mathcal{H}(n_c)$ measures the probability that a patch in cluster $n_c$ comes from a forged image. A higher value indicates that the cluster captures more forgery-related evidence while separating it from genuine visual patterns.

\textbf{Cluster--Label Dependence $\mathcal{I}$.}
We employ mutual information~\cite{mi} between cluster assignments and ground-truth labels to quantify the discriminative structure of clusters. Let $C$ denote the cluster assignment variable, where $n_c\in\{1,\dots,N_c\}$ indexes a cluster, and $Y\in\{0,1\}$ denotes the label. We define $\mathcal{I}$ as
\begin{equation}
\mathcal{I}(C;Y)
=
\sum_{n_c=1}^{N_c}\sum_{y\in\{0,1\}}
p(n_c)p(y\mid n_c)
\log \frac{p(y\mid n_c)}{p(y)} .
\end{equation}

\begin{equation}
p(y\mid n_c)
=
\frac{
\left|\{(i,j)\mid(i,j)\in S_{n_c}\land y_i=y\}\right|
}{
|S_{n_c}|
}.
\end{equation}
Here, $p(y\mid n_c)$ is the empirical probability that a patch assigned to cluster $n_c$ belongs to class $y$, while $p(n_c)$ and $p(y)$ denote the corresponding marginal probabilities. A higher $\mathcal{I}$ indicates stronger alignment between the cluster structure and forensic labels, suggesting better separation between real and manipulated samples.
% \textbf{Cluster–Label Dependence $\mathcal{I}$. } To quantify the discriminative power of the unsupervised clusters, we employ Mutual Information~\cite{mi} between the cluster assignments and the ground-truth labels. Let $C$ be the random variable representing the cluster assignment $n_c \in \{1, \dots, N_c\}$, and we define $\mathcal{I}$ as:

% \begin{equation}
% I(C;Y)=\sum_{n_c=1}^{N_c}\sum_{y\in\{0,1\}} p(n_c,y)\,
% \log \frac{p(n_c,y)}{p(n_c)\,p(y)} ,
% \end{equation}

% \begin{equation}
% p(n_c,y)=\frac{\left|\left\{(i,j)\mid (i,j)\in S_{n_c}\ \wedge\ y_i=y\right\}\right|}
% {\sum_{n'=1}^{N_c}\left|S_{n'}\right|} .
% \end{equation}
% where $p(n_k, y)$ is the joint empirical probability of observing a patch in cluster $n_k$ with label $y$, and $p(n_c)$ and $p(y)$ are the marginal probabilities. A higher $\mathcal{I}$ indicates that the clusters act as effective proxies for forensic labels, separating real and manipulated patches in the feature space.

\textbf{Cross-Dataset Fake Density Correlation $\mathcal{P}$.}
$\mathcal{P}$ characterizes whether dominant fake clusters preserve their forensic relevance across domains. For a dataset $\mathcal{D}$, we define the fake density vector:
\begin{equation}
\mathbf{V} = \left[ \mathcal{H}(1), \mathcal{H}(2), \dots, \mathcal{H}(N_c) \right]^\top.
\end{equation}
Let $\mathbf{V}_{tr}$ and $\mathbf{V}_{ts}$ denote the vectors computed on the training and test datasets using cluster centroids fixed from the training set. We select the top-$M$ fake-dominated clusters in $\mathbf{V}_{tr}$ as an index set $\mathcal{M}$, where $|\mathcal{M}|=M$, and compute the Pearson correlation coefficient:

\begin{equation}
P =
\frac{
\sum_{m \in \mathcal{M}} (V_{\mathrm{tr}}[m]-\bar{V}_{\mathrm{tr}})(V_{\mathrm{ts}}[m]-\bar{V}_{\mathrm{ts}})
}{
\sqrt{\sum_{m \in \mathcal{M}} (V_{\mathrm{tr}}[m]-\bar{V}_{\mathrm{tr}})^2}
\sqrt{\sum_{m \in \mathcal{M}} (V_{\mathrm{ts}}[m]-\bar{V}_{\mathrm{ts}})^2}
},
\end{equation}
where $\bar{V}$ denotes the mean of the selected $M$ elements. A higher $P$ indicates that fake-dominated clusters in the training domain remain consistent in the test domain, suggesting reliance on transferable forgery patterns.

\begin{figure}
    \centering
    \includegraphics[width=0.9\linewidth]{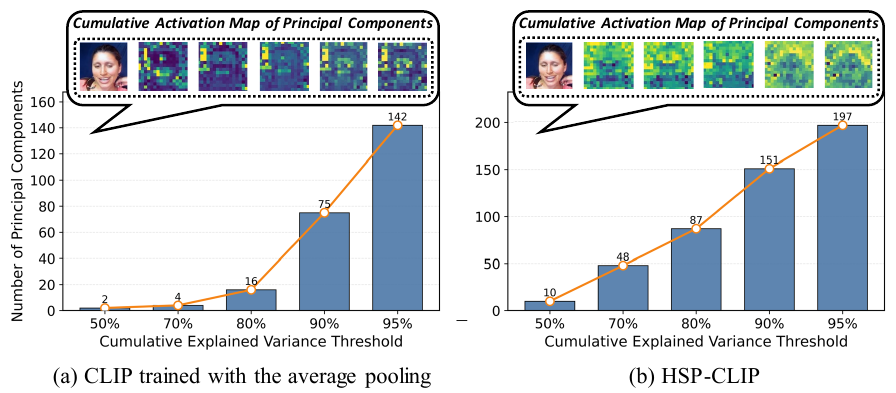}
    \caption{PCA-based analysis of forgery representations. HSP-CLIP requires more principal components to explain the same variance and produces broader cumulative spatial activations, suggesting that it captures more latent forensic factors within a structured subspace.}
    % Compared with naively fine-tuned CLIP, HSP-CLIP consistently requires substantially more principal components to explain the same amount of variance, indicating that forgery-relevant information is distributed across a richer and structured subspace rather than concentrated in a few dominant directions. The accompanying cumulative activation maps of principal components further reveal that naively fine-tuned CLIP accumulates its responses on only a few patches, whereas HSP-CLIP exhibits broader and more complementary spatial regions.}
    \label{fig:analysis-pca}
\end{figure}

\subsection{Empirical Analysis on Deepfake Benchmarks}
 \textbf{Key Observation: }\textit{Naive fine-tuning improves source-domain separability but induces a shortcut-dominated, low-rank forensic representation, whereas HSP improves cross-domain feature stability by modeling multiple forgery factors.}

We apply the diagnostics to analyze state-of-the-art detectors trained on FaceForensics++~\cite{faceforensics++} and evaluated on Celeb-DF-v2~\cite{celebdf}, DFDCP~\cite{dfdcp}, and DFDC~\cite{dfdc}. Our analysis centers on ViT-L/14 initialized with CLIP~\cite{clip} weights. For each target dataset, we average results over five trials with 2,000 real and 2,000 fake images, using 64 cluster centroids from the training set and computing $\bar{\mathcal{H}}$ and $\mathcal{P}$ over the top-10 fake-dominated clusters. 

Results are summarized in Table~\ref{tab:analysis-metrics}, with additional results for different values of $M$ provided in Appendix~\ref{a-c}. We observe that: 1) vanilla backbones (\textit{e.g.}, CLIP) yield nearly random scores (\textit{e.g.}, $\bar{\mathcal{H}}\approx0.55$, $\mathcal{I}\approx0.001$), indicating that generic visual pre-training lacks intrinsic sensitivity to forgery-specific evidence; 2) naive fine-tuning (CLIP-FT) achieves higher $\bar{\mathcal{H}}$, suggesting that it captures forgery-related features and improves source-domain discrimination. However, $\mathcal{I}$ and $\mathcal{P}$ substantially degrade under OOD evaluation, revealing that the learned patterns are dominated by source-specific factors and lack stable cross-domain forensic structures; 3) in contrast, HSP-CLIP preserves high cluster-level fake density while consistently improving both $\mathcal{I}$ and $\mathcal{P}$, indicating that our multi-factor forgery representation learning organizes forgery evidence into a more domain-stable representation and improves cross-domain generalization.

We further perform PCA on patch embeddings extracted from 5,000 test images and compare the number of principal components required to reach different explained-variance thresholds. We further visualize the cumulative activation maps of the leading principal components by accumulating their token-wise projection energy. Intuitively, if forgery evidence collapses into a few shortcut-dominated directions, the representation should be highly low-rank, requiring only a small number of principal components while producing spatial responses concentrated on limited patches. Conversely, a structured forgery manifold should require more principal components and exhibit broader spatial activation across complementary forensic regions. As shown in Figure~\ref{fig:analysis-pca}, naively fine-tuned CLIP explains most of its variance with only a few principal directions, whereas HSP-CLIP requires substantially more principal components to reach the same explained-variance thresholds. The cumulative activation maps further reveal a clear geometric difference: CLIP mainly responds to a small number of localized patches, while HSP-CLIP distributes its responses across broader and more complementary spatial regions. These results suggest that HSP-CLIP organizes forgery evidence into a structured manifold rather than compressing it into a shortcut-dominated low-rank representation, providing empirical support for the structured forgery manifold proposed in Figure~\ref{fig:intro-manifold}.

\section{Methodology}
\label{sec:methodology}
% Our analysis in Section~\ref{sec:analysis} reveals that the primary obstacle to cross-dataset deepfake detection is the lack of a stable structure for organizing forgery evidence. Under conventional fine-tuning, intermediate token features are only indirectly supervised by the final binary classification objective. As a result, they are free to drift toward arbitrary discriminative directions associated with compression patterns or manipulation-specific artifacts. Although these directions may produce high accuracy on the training distribution, they do not necessarily correspond to transferable forgery features. We therefore formulate cross-dataset deepfake detection from the perspective of \emph{structured forgery manifold learning} and propose a plug-in module termed \textbf{Hierarchical Spatial Prototype} (HSP), as illustrated in Figure~\ref{fig:method-overview}. The core idea is to constrain intermediate spatial representations to be explained by a finite set of shared and complementary forgery factors.
% Our analysis in Section~\ref{sec:analysis} reveals that the primary obstacle to cross-dataset deepfake detection is the lack of a stable structure for organizing forgery evidence. We therefore formulate cross-dataset deepfake detection from the perspective of \emph{structured forgery manifold learning} and propose a plug-in module termed \textbf{Hierarchical Spatial Prototype} (HSP), as illustrated in Figure~\ref{fig:method-overview}. The core idea is to constrain intermediate spatial representations to be explained by a set of complementary forgery factors. 

The observations in Section~\ref{sec:analysis} suggest that weak cross-domain performance stems from narrow, low-rank representations dominated by a few source-specific forgery cues. Accordingly, we formulate deepfake detection from the perspective of \emph{structured forgery manifold learning} and propose a plug-in module termed \textbf{Hierarchical Spatial Prototype} (HSP), as illustrated in Figure~\ref{fig:method-overview}. The core idea is to constrain intermediate spatial representations to be explained by a set of complementary forgery factors.  

\subsection{Prototype-Supported Forgery Manifold}
\label{sec:prototype_reconstruction}
\textit{How should we organize heterogeneous forgery evidence into a structured feature manifold that can accommodate multiple latent forgery factors?}
To address this question, given intermediate token embeddings $Z \in \mathbb{R}^{B \times N \times D}$ from a ViT layer, we introduce $K$ learnable prototypes $P=\{p_k\}_{k=1}^{K}$, where each prototype represents a latent spatial forgery factor. Since real-world manipulations typically involve multiple intertwined effects, such as blending, geometric inconsistency, and post-processing, assigning each token to a single prototype is overly restrictive. We therefore adopt temperature-scaled soft assignment, allowing each token $z_i\in\mathbb{R}^{D}$ to be reconstructed as a weighted combination of all prototypes:
\begin{equation}
\hat{z}_i
=
\sum_{k=1}^{K}
\underbrace{
\frac{
\exp\!\left(\cos(z_i,p_k)/\tau\right)
}{
\sum_{j=1}^{K}
\exp\!\left(\cos(z_i,p_j)/\tau\right)
}
}_{A_{i,k}}
\, p_k ,
\end{equation}
where $\tau$ is the temperature parameter and $A_{i,k}$ denotes the soft assignment weight of token $z_i$ to prototype $p_k$.

% Such a soft reconstruction mechanism is therefore more consistent with the structured-manifold perspective introduced in Section~\ref{sec:analysis}.
HSP is designed as a plug-in module that preserves the original representational capacity while imposing structured organization on intermediate features. We introduce prototype reconstruction as an auxiliary constraint rather than replacing original token representations. Direct replacement would force all tokens into a finite prototype-supported subspace and may discard subtle forensic cues beyond the learned prototypes. Instead, the reconstruction objective softly aligns features with shared forgery factors while retaining the flexibility of pretrained representations. Specifically, we adopt a dual-path reconstruction loss with stop-gradient:
\begin{equation}
\mathcal{L}_{\mathrm{rec}}
=
\underbrace{\| Z - \mathrm{sg}(\hat{Z}) \|_2^2}_{\shortstack{\scriptsize \text{feature-to-prototype} \\ \scriptsize \text{alignment term}}}
+ \underbrace{\alpha \| \mathrm{sg}(Z) - \hat{Z} \|_2^2}_{\shortstack{\scriptsize \text{prototype-to-feature} \\ \scriptsize \text{reconstruction term}}},
\end{equation}
where $\mathrm{sg}(\cdot)$ denotes the stop-gradient operator, and the weighting coefficient $\alpha$ is set to 2 by default. Similar designs have been used to stabilize representation learning~\cite{vqvae,simsiam}.
% By constraining token features to be expressed through a compact set of representative prototypes, our method encourages heterogeneous forgery patterns to be absorbed into purified latent clusters, which helps explain the higher Cluster-level Fake Density ($\bar{\mathcal{H}}$) observed in Table~\ref{tab:analysis-metrics}.

\begin{figure*}[t]
    \centering
    \includegraphics[width=0.8\linewidth]{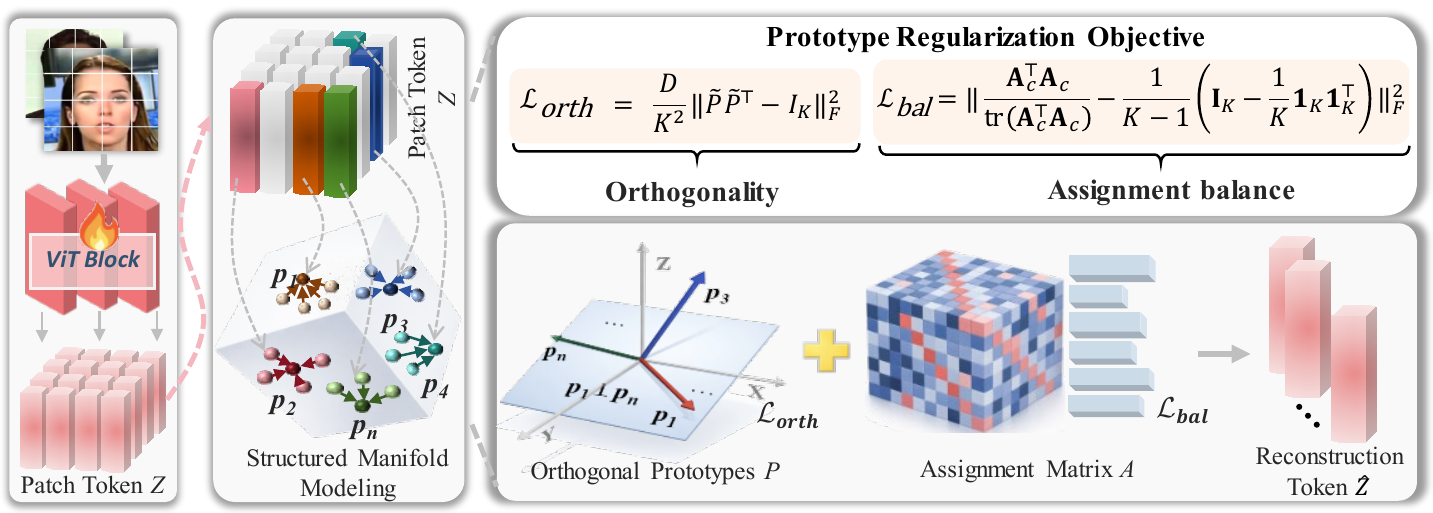}
    \caption{Overview of the proposed \textbf{H}ierarchical \textbf{S}patial \textbf{P}rototype (\textbf{HSP}) module for structured-manifold modeling. HSP introduces $K$ learnable spatial prototypes to model latent forgery factors. A soft assignment matrix $A$ decomposes each token into a combination of multiple prototypes and enables heterogeneous forgery traces to be modeled within a prototype-supported manifold. Prototype orthogonality keeps these forgery factors geometrically decoupled, while second-order assignment balance encourages diverse utilization of the prototype space and prevents the representation from collapsing into a few dominant factors.}
    % Together, these constraints provably drive the learned forensic features toward their maximum admissible effective rank, preventing low-rank collapse and reliance on source-specific shortcuts.
    \label{fig:method-overview}
\end{figure*}

\subsection{Structured Forgery Manifold Regularization}
Based on the learned forgery prototypes, we constrain both prototype geometry and assignment distribution to ensure complementary factor modeling and effective prototype utilization. We first impose an \textit{orthogonality} regularization on the normalized prototype matrix:
\begin{equation}
\mathcal{L}_{\mathrm{orth}}
=
\frac{D}{K^2}
\left\|\tilde{P}\tilde{P}^{\top}-I_K\right\|_F^2
\;\equiv\;
\frac{D}{K^2}
\sum_{i\ne j}
\left(
\frac{p_i^{\top}p_j}{\|p_i\|_2\,\|p_j\|_2}
\right)^2,
\end{equation}
where $\tilde{P}\in\mathbb{R}^{K\times D}$ is the row-wise normalized prototype matrix, $I_K$ is the identity matrix, and $\|\cdot\|_F$ denotes the Frobenius norm. Although orthogonality encourages diverse forgery factors, assignment collapse may still concentrate features on a few dominant prototypes, leaving the representation effectively low-rank. To this end, we model the second-order geometry of the assignment space. Let $\mathbf A\in\mathbb{R}^{BN\times K}$ denote the assignment matrix of all tokens in a mini-batch. Since each row of $\mathbf A$ is a probability vector, $\mathbf A\mathbf 1_K=\mathbf 1_{BN}$. We center the assignment matrix and introduce the \textit{assignment-balance} regularizer:
\begin{equation}
\begin{aligned}
\mathcal L_{\mathrm{bal}}
&=
\left\|
\frac{
\mathbf A_c^{\top}\mathbf A_c
}{
\operatorname{tr}(\mathbf A_c^{\top}\mathbf A_c)
}
-
\frac{1}{K-1}
\left(
\mathbf I_K-\frac{1}{K}\mathbf 1_K\mathbf 1_K^{\top}
\right)
\right\|_F^2, \\
\bar{\mathbf a}
&=
\frac{1}{BN}\mathbf A^{\top}\mathbf 1_{BN},
\qquad
\mathbf A_c
=
\mathbf A-\mathbf 1_{BN}\bar{\mathbf a}^{\top}.
\end{aligned}
\end{equation}
The matrix $\mathbf I_K-\frac{1}{K}\mathbf 1_K\mathbf 1_K^{\top}$ is the rank-$(K-1)$ projector onto the centered assignment subspace. Therefore, $\mathcal L_{\mathrm{bal}}$ encourages the forgery manifold to exploit all admissible prototype-assignment directions instead of collapsing onto a few source-specific prototypes.

\subsection{Effective Rank of the Forensic Representation}
Prototype orthogonality preserves non-redundant forgery directions, while assignment balance distributes feature variation over the full subspace permitted by soft assignments; together, they drive the forensic
representation toward its maximum admissible effective rank.

% maximize the effective rank of the learned deepfake forensic representation.

\paragraph{Proposition.}
Let $\widetilde{\mathbf P}\in\mathbb{R}^{K\times D}$ denote the row-normalized prototype matrix, and let $\widehat{\mathbf Z}_c=\mathbf A_c\widetilde{\mathbf P}$ be the centered reconstructed forensic features. Assume $K\leq D$. If $\mathcal L_{\mathrm{orth}}\rightarrow0$ and $\mathcal L_{\mathrm{bal}}\rightarrow0$, then the trace-normalized Gram matrix of $\widehat{\mathbf Z}_c$ converges to a rank-$(K-1)$ spectrum with equal nonzero eigenvalues. Consequently, its effective rank converges to the maximum admissible value, i.e., $r_{\mathrm{eff}}(\widehat{\mathbf Z}_c^{\top}\widehat{\mathbf Z}_c)\rightarrow K-1$.

\paragraph{Proof.}
Because each row of $\mathbf A$ is a probability vector, $\mathbf A\mathbf 1_K=\mathbf 1_{BN}$. Moreover, from $\bar{\mathbf a}=\frac{1}{BN}\mathbf A^{\top}\mathbf 1_{BN}$, we have $\bar{\mathbf a}^{\top}\mathbf 1_K=1$. Therefore,
\begin{equation}
\mathbf A_c\mathbf 1_K
=
\left(
\mathbf A-\mathbf 1_{BN}\bar{\mathbf a}^{\top}
\right)\mathbf 1_K
=
\mathbf 1_{BN}
-
\mathbf 1_{BN}
\left(
\bar{\mathbf a}^{\top}\mathbf 1_K
\right)
=
\mathbf 0,
\label{eq:centered_assignment_nullspace}
\end{equation}
which implies $\operatorname{rank}(\mathbf A_c)\leq K-1$.

As $\mathcal L_{\mathrm{bal}}\rightarrow0$, the trace-normalized assignment covariance converges to the normalized projector onto the centered assignment subspace:
\begin{equation}
\frac{
\mathbf A_c^{\top}\mathbf A_c
}{
\operatorname{tr}\!\left(
\mathbf A_c^{\top}\mathbf A_c
\right)
}
\rightarrow
\frac{1}{K-1}
\left(
\mathbf I_K
-
\frac{1}{K}
\mathbf 1_K\mathbf 1_K^{\top}
\right).
\label{eq:assignment_covariance_limit}
\end{equation}
The matrix on the right has one zero eigenvalue along $\mathbf 1_K$ and $K-1$ equal nonzero eigenvalues, each equal to $1/(K-1)$. Meanwhile, $\mathcal L_{\mathrm{orth}}\rightarrow0$ implies $\widetilde{\mathbf P}\widetilde{\mathbf P}^{\top}\rightarrow\mathbf I_K$, so the rows of $\widetilde{\mathbf P}$ become orthonormal. The Gram matrix of the centered reconstructed features is
$\widehat{\mathbf Z}_c^{\top}\widehat{\mathbf Z}_c
=
\widetilde{\mathbf P}^{\top}
\mathbf A_c^{\top}\mathbf A_c
\widetilde{\mathbf P}.$

Because $\widetilde{\mathbf P}$ has orthonormal rows, this transformation preserves the nonzero eigenvalues of $\mathbf A_c^{\top}\mathbf A_c$. Therefore, the trace-normalized Gram matrix of $\widehat{\mathbf Z}_c$ also has $K-1$ equal nonzero eigenvalues, each converging to $1/(K-1)$. By the definition of effective rank,

\begin{equation}
\begin{aligned}
r_{\mathrm{eff}}\!\left(
\widehat{\mathbf Z}_c^{\top}\widehat{\mathbf Z}_c
\right)
&\rightarrow
\exp\!\left(
-\sum_{k=1}^{K-1}
\frac{1}{K-1}
\log\frac{1}{K-1}
\right) \\
&= K-1.
\end{aligned}
\end{equation}
\hfill$\square$

The reconstruction objective encourages
$\mathbf Z_c$ to remain close to its prototype-based reconstruction
$\widehat{\mathbf Z}_c$. Let
$\mathbf E_c=\mathbf Z_c-\widehat{\mathbf Z}_c$ denote the reconstruction error. We have
\begin{equation}
\begin{aligned}
\left\|
\mathbf Z_c^{\top}\mathbf Z_c
-
\widehat{\mathbf Z}_c^{\top}\widehat{\mathbf Z}_c
\right\|_2
&\leq
2\left\|\widehat{\mathbf Z}_c\right\|_2
\left\|\mathbf E_c\right\|_2
+
\left\|\mathbf E_c\right\|_2^2 .
\end{aligned}
\label{eq:gram_perturbation}
\end{equation}
By Weyl's eigenvalue perturbation inequality~\cite{horn2012matrix}, the eigenvalue deviation between $\mathbf Z_c^{\top}\mathbf Z_c$ and $\widehat{\mathbf Z}_c^{\top}\widehat{\mathbf Z}_c$ is bounded by Eq.~\eqref{eq:gram_perturbation}. Therefore, with small reconstruction error, the Gram spectra remain close, allowing the high-effective-rank structure of $\widehat{\mathbf Z}_c$ to be approximately inherited by $\mathbf Z_c$.

% Intuitively, $\mathcal{L}_{\mathrm{orth}}$ prevents multiple prototypes from collapsing onto the same dataset-specific artifact direction, thereby preserving a geometrically diverse prototype basis. $\mathcal{L}_{\mathrm{bal}}$ discourages assignment collapse onto a few ``easy'' prototypes and promotes broader coverage of fine-grained forgery cues. Their combination thus addresses both prototype redundancy and usage imbalance, transforming the intermediate ViT representation from a shortcut-dominated space into a more structured and transferable forgery space. This directly corresponds to our analysis of $\mathcal{H}$, $\mathcal{I}$, and $\mathcal{P}$ in Table~\ref{tab:analysis-metrics}: broader prototype coverage strengthens feature separability ($\mathcal{I}$), while a non-redundant prototype basis improves cross-dataset structural consistency ($\mathcal{P}$).

\begin{table*}[t]
\centering
\renewcommand{\arraystretch}{0.7}
\caption{Comparison of video- and frame-level AUC (\%) on cross-dataset benchmarks. $\dagger$ denotes results from official models or our reproduction; other results are from~\cite{effort,x2dfd} and original papers.}
\label{tab:exp-cross-dataset}
\setlength{\tabcolsep}{2.5pt}
\resizebox{0.9\linewidth}{!}{%
\begin{tabular}{c|cccccc|c|cccccc}
\toprule
\multirow{2}{*}{\textbf{Methods}} 
& \multicolumn{6}{c|}{\textbf{Video-level AUC}} 
& \multirow{2}{*}{\textbf{Methods}} 
& \multicolumn{6}{c}{\textbf{Frame-level AUC}} \\
\cmidrule(lr){2-7} \cmidrule(lr){9-14}
& CDF-v2 & CDF-v1 & DFDC & DFDCP & DFD & \textbf{Avg.}
&
& CDF-v2 & CDF-v1 & DFDC & DFDCP & DFD & \textbf{Avg.} \\
\midrule \midrule

F3Net (ECCV'20) & 78.9 & -- & 71.8 & 74.9 & 84.4 & 77.5
& Efficient-b4$\dagger$ (ICML'19) & 74.9 & 70.5 & 69.6 & 72.8 & 81.5 & 73.9 \\

LaV (MM'23) & 92.2 & 84.7 & 82.0 & 89.7 & 96.8 & 89.1
& Face X-Ray (CVPR'20) & 67.9 & 70.9 & 63.3 & 69.4 & 76.7 & 70.0 \\

IID (CVPR'23) & 83.8 & -- & 70.0 & 68.9 & 93.9 & 79.2
& RECCE (CVPR'22) & 73.2 & -- & 71.3 & 74.2 & 74.2 & 73.2 \\

CFM (TIFS'23) & 89.7 & -- & 70.6 & 80.2 & 95.2 & 83.9
& UCF$\dagger$ (ICCV'23) & 82.4 & 77.9 & 80.5 & 75.9 & 87.8 & 80.9 \\

SBIs (CVPR'22) & 90.6 & -- & 75.2 & 87.7 & 88.2 & 85.4
& ICT (CVPR'23) & 85.7 & 81.4 & -- & -- & 84.1 & 83.7 \\

ProDe$\dagger$ (NeurIPS'24) & 92.6 & 95.7 & 70.7 & 82.8 & 95.4 & 87.4
& ProDe$\dagger$ (NeurIPS'24) & 84.5 & 90.9 & 72.4 & 81.2 & -- & 82.3 \\

CDFA$\dagger$ (ECCV'24) & 93.8 & 92.1 & 83.0 & 88.1 & 95.4 & 90.5
& DFFake (NeurIPS'24) & 80.5 & -- & -- & 81.0 & 90.4 & 84.0 \\

FakeRadar (ICCV'25) & 91.7 & -- & 84.1 & 88.5 & 96.2 & 90.1
& ExposDe$\dagger$ (AAAI'24) & 86.4 & 81.8 & 72.1 & 85.1 & 89.8 & 83.0 \\

VbSaT (CVPR'25) & 94.7 & -- & 84.3 & 90.9 & 96.5 & 91.6
& DAID (NeurIPS'25) & 84.4 & -- & 66.9 & -- & 91.2 & 80.8 \\

DeepShield (ICCV'25) & 92.2 & -- & 82.8 & \textbf{93.2} & 96.1 & 91.1
& UDD (AAAI'25) & 86.9 & -- & 75.8 & 85.6 & 91.0 & 84.8 \\

FCG (CVPR'25) & 95.0 & -- & 81.8 & -- & -- & 88.4
& FSgD (NeurIPS'25) & 86.7 & 90.1 & -- & 81.8 & 82.1 & 85.2 \\

$\chi^2$-DFD (NeurIPS'25) & 95.5 & -- & 85.3 & 91.2 & 95.7 & 91.9
& LSDA (CVPR'24) & 83.0 & 86.7 & 73.6 & 81.5 & 88.0 & 82.6 \\

Effort$\dagger$ (ICML'25) & 95.6 & 91.8 & 84.3 & 90.9 & 96.5 & 91.8
& FreqDebias (CVPR'25) & 83.6 & 87.5 & 74.1 & 82.4 & 86.8 & 82.9 \\
\midrule

\rowcolor{iceblue}
HSP-CLIP
& \textbf{96.4} \textsuperscript{\textcolor{growthgreen}{\textbf{+0.8}}}
& \textbf{97.1} \textsuperscript{\textcolor{growthgreen}{\textbf{+1.4}}}
& \textbf{89.2} \textsuperscript{\textcolor{growthgreen}{\textbf{+3.9}}}
& 92.2
& \textbf{97.5} \textsuperscript{\textcolor{growthgreen}{\textbf{+0.7}}}
& \textbf{94.5} \textsuperscript{\textcolor{growthgreen}{\textbf{+2.6}}}
& HSP-CLIP
& \textbf{90.9} \textsuperscript{\textcolor{growthgreen}{\textbf{+4.0}}}
& \textbf{91.9} \textsuperscript{\textcolor{growthgreen}{\textbf{+1.0}}}
& \textbf{86.7} \textsuperscript{\textcolor{growthgreen}{\textbf{+6.2}}}
& \textbf{90.0} \textsuperscript{\textcolor{growthgreen}{\textbf{+4.4}}}
& \textbf{94.3} \textsuperscript{\textcolor{growthgreen}{\textbf{+3.1}}}
& \textbf{90.8} \textsuperscript{\textcolor{growthgreen}{\textbf{+5.6}}} \\
\bottomrule
\end{tabular}%
}
\end{table*}

\begin{table*}[t]
\centering
\renewcommand{\arraystretch}{0.7}
\caption{Video-level and frame-level AUC (\%) comparison on unseen manipulation methods. Results are reported on six representative face-swapping methods selected from DF40, where all samples are generated based on the FF++ domain.}
\label{tab:exp-cross-manipulation}
\setlength{\tabcolsep}{0.2pt}
\resizebox{0.9\linewidth}{!}{%
\begin{tabular}{c|ccccccc|c|ccccccc}
\toprule
\multirow{2}{*}{\textbf{Methods}} 
& \multicolumn{7}{c|}{\textbf{Video-level AUC}} 
& \multirow{2}{*}{\textbf{Methods}} 
& \multicolumn{7}{c}{\textbf{Frame-level AUC}} \\
\cmidrule(lr){2-8} \cmidrule(lr){10-16}
& UniFace & e4s & FaceDancer & FSGAN & InSwap & SimSwap & \textbf{Avg.}
& 
& UniFace & e4s & FaceDancer & FSGAN & InSwap & SimSwap & \textbf{Avg.} \\ 
\midrule \midrule
RECCE (CVPR'22) & 89.8 & 68.3 & 84.8 & 94.9 & 84.8 & 76.8 & 83.2 & RECCE (CVPR'22) & 84.2 & 65.2 & 78.3 & 88.4 & 79.5 & 73.0 & 78.1 \\
SBIs (CVPR'22) & 72.4 & 75.0 & 59.4 & 80.3 & 71.2 & 70.1 & 71.4 & IID (CVPR'23) & 79.5 & 71.0 & 79.0 & 86.4 & 74.4 & 64.0 & 75.7 \\
UCF (ICCV'23) & 83.1 & 73.1 & 86.2 & 93.7 & 80.9 & 64.7 & 80.3 & SBIs (CVPR'22) & 64.4 & 69.0 & 44.7 & 87.9 & 63.3 & 56.8 & 64.4 \\
IID (CVPR'23) & 83.9 & 76.6 & 84.4 & 92.7 & 78.9 & 64.4 & 80.2 & UCF (ICCV'23) & 78.7 & 69.2 & 80.0 & 88.1 & 76.8 & 64.9 & 77.5 \\
LSDA (CVPR'24) & 87.2 & 69.4 & 72.1 & 93.9 & 85.5 & 69.3 & 79.6 & LSDA (CVPR'24) & 85.4 & 68.4 & 75.9 & 83.2 & 81.0 & 72.7 & 77.8 \\
ProDe (NeurIPS'24) & 90.8 & 77.1 & 74.7 & 92.8 & 83.7 & 84.4 & 83.9 & CDFA (ECCV'24) & 76.5 & 67.4 & 75.4 & 84.8 & 72.0 & 76.1 & 75.9 \\
CDFA (ECCV'24) & 76.2 & 63.1 & 80.3 & 94.2 & 77.2 & 75.7 & 77.8 & ProDe (NeurIPS'24) & 84.5 & 71.0 & 73.6 & 86.5 & 78.8 & 77.8 & 78.7 \\
VbSaT (CVPR'25) & 96.0 & 98.0 & 91.6 & 96.4 & 93.7 & 93.1 & 94.8 & FSgD (NeurIPS'25) & 91.8 & 87.5 & 83.0 & 86.3 & 87.4 & \textbf{91.0} & 87.8 \\
Effort (ICML'25) & 96.2 & 98.3 & 92.6 & 95.7 & 93.6 & 92.6 & 94.8 & $\chi^2\text{-DFD}$ (NeurIPS'25) & 85.2 & 91.2 & 83.8 & 89.9 & 78.4 & 84.9 & 85.6 \\ \midrule
\rowcolor{iceblue}
HSP-CLIP
& \textbf{96.6} \textsuperscript{\textcolor{growthgreen}{\textbf{+0.4}}}
& \textbf{99.0} \textsuperscript{\textcolor{growthgreen}{\textbf{+0.7}}}
& \textbf{96.2} \textsuperscript{\textcolor{growthgreen}{\textbf{+3.6}}}
& \textbf{98.6} \textsuperscript{\textcolor{growthgreen}{\textbf{+2.2}}}
& \textbf{95.6} \textsuperscript{\textcolor{growthgreen}{\textbf{+1.9}}}
& \textbf{93.8} \textsuperscript{\textcolor{growthgreen}{\textbf{+0.7}}}
& \textbf{96.6} \textsuperscript{\textcolor{growthgreen}{\textbf{+1.8}}}
& HSP-CLIP
& \textbf{93.4} \textsuperscript{\textcolor{growthgreen}{\textbf{+1.6}}}
& \textbf{97.2} \textsuperscript{\textcolor{growthgreen}{\textbf{+6.0}}}
& \textbf{91.8} \textsuperscript{\textcolor{growthgreen}{\textbf{+8.0}}}
& \textbf{95.8} \textsuperscript{\textcolor{growthgreen}{\textbf{+5.9}}}
& \textbf{91.4} \textsuperscript{\textcolor{growthgreen}{\textbf{+4.0}}}
& 88.7
& \textbf{93.1} \textsuperscript{\textcolor{growthgreen}{\textbf{+5.3}}} \\ \bottomrule
\end{tabular}%
}
\end{table*}

\subsection{Integration and Training Objective}
\label{sec:integration_objective}
Finally, HSP is implemented as a lightweight plug-in module and inserted at the $1/4$ and $1/2$ depths of the transformer. The plug-in formulation allows HSP to be integrated into both standard ViT architectures and recent ViT-based deepfake detectors, such as Effort~\cite{effort}, with minimal modification, as demonstrated in Figure~\ref{sec5-fig2}(a). The overall loss function for training the model combines the classification loss $\mathcal{L}_{\text{ce}}$, the reconstruction loss $\mathcal{L}_{\text{rec}}$, the structural regularization terms $\mathcal{L}_{\text{orth}}$ and $\mathcal{L}_{\text{bal}}$,

\begin{equation}
\mathcal{L} =  \mathcal{L}_{\text{ce}} + \lambda_{1} \mathcal{L}_{\text{rec}} + \lambda_{2} \mathcal{L}_{\text{orth}} + \lambda_{3} \mathcal{L}_{\text{bal}},
\end{equation}
where $\lambda_{1}$, $\lambda_{2}$ and $\lambda_{3}$ are hyperparameters that balance the importance of the corresponding terms. 

\section{Experiments}
\subsection{Experimental Setting}
We train all models on the c23 version of FaceForensics++~\cite{faceforensics++}, consistent with the standard detection protocol in DeepfakeBench~\cite{deepfakebench}. We evaluate generalization along two dimensions: \textit{Cross-Dataset Evaluation} and \textit{Cross-Manipulation Evaluation}. We use CLIP ViT-L/14~\cite{clip} as the default backbone and evaluate other ViT-based backbones in Table~\ref{tab:exp-backbone-compat}. Following DeepfakeBench, we sample 8/32 frames for training/inference. AdamW~\cite{adamw} is used with learning rates of $1\times10^{-6}$ for CLIP and $1\times10^{-4}$ for HSP, and batch sizes of 128/256. We apply standard augmentations~\cite{deepfakebench} and report video-level AUC and frame-level AUC/ACC. The default setting uses $K=128$ prototypes, with loss weights $(1,1,1)$ and $(0.5,0.5,0.5)$ for the first and second HSP modules, respectively. More details on the compared baselines are provided in Appendix~\ref{a-b}.

\subsection{Cross-Paradigm Generalization Ability}
\textbf{Cross-Dataset Evaluation.}
We conduct cross-dataset evaluation to assess the generalization capability of HSP-CLIP, as summarized in Table~\ref{tab:exp-cross-dataset}. HSP-CLIP consistently outperforms all state-of-the-art baselines, achieving the highest average AUC at both video and frame levels. At the video level, HSP-CLIP achieves an average AUC of \textbf{94.5\%}, surpassing the strongest competitor ($\chi^2$-DFD) by \textbf{+2.6\%}, with notable improvements on challenging datasets such as DFDC (+3.9\%). At the frame level, HSP-CLIP further achieves an average AUC of \textbf{90.8\%}, outperforming FSgD (85.2\%) by \textbf{+5.6\%} and obtaining consistent gains on Celeb-DF-v2 (+4.0\%), DFDC (+6.2\%), and DFDCP (+4.4\%). 

\noindent \textbf{Cross-Manipulation Evaluation.}
We also conduct cross-manipulation evaluation on DF40, as shown in Table~\ref{tab:exp-cross-manipulation}. HSP-CLIP consistently achieves state-of-the-art performance across six representative face-swapping methods. It reaches an average video-level AUC of \textbf{96.6\%}, outperforming Effort and VbSaT by \textbf{+1.8\%}. At the frame level, HSP-CLIP further achieves an average AUC of \textbf{93.1\%}, exceeding FSgD(87.8\%) by \textbf{+5.3\%}, with substantial improvements on challenging manipulations such as FaceDancer (+8.0\%) and e4s (+6.0\%).

These results demonstrate that HSP-CLIP effectively organizes complex forgery evidence into a diverse and structurally decoupled prototype space, reducing reliance on dataset-specific shortcuts and preserving transferable forensic patterns across diverse manipulation strategies.

% \begin{table*}[t]
% \centering
% % \setlength{\tabcolsep}{2.5pt}
% \caption{Comparison of Accuracy (Acc, \%) for different HSP insertion positions across various datasets.}
% \label{tab:exp-ablation-insert-position}
% \resizebox{\textwidth}{!}{
% \begin{tabular}{lccccccccccc}
% \toprule
% \textbf{Position} & \textbf{CDF-v2} & \textbf{CDF-v1} & \textbf{DFDC} & \textbf{DFDCP} & \textbf{DFD} & \textbf{InSwap} & \textbf{FSGAN} & \textbf{e4s} & \textbf{SimSwap} & \textbf{UniFace} & \textbf{FaceDancer} \\ \midrule
% 1/4                & 80.2 & 81.4 & 74.6 & 77.4 & 87.4 & 78.3 & 90.9 & \textbf{94.7} & 81.4 & 85.5 & 85.3 \\
% 1/2                & \textbf{82.4} & 81.1 & 74.3 & 79.5 & \textbf{90.8} & \textbf{83.5} & 89.7 & 92.1 & 83.1 & 87.7 & \textbf{85.7} \\
% 3/4                & 80.5 & 82.4 & 75.6 & 80.7 & 90.3 & 76.2 & \textbf{91.1} & 94.2 & 82.8 & 87.7 & 84.8 \\
% 1/4+1/2+3/4        & 82.0 & 81.7 & 76.0 & 80.5 & 90.6 & 80.9 & 90.1 & 93.6 & 82.0 & 86.9 & 85.0 \\
% \rowcolor{iceblue}
% \textbf{1/4+1/2}   & 82.3 & \textbf{83.4} & \textbf{77.4} & \textbf{81.0} & 89.3 & 83.2 & 90.3 & 93.2 & \textbf{84.7} & \textbf{89.4} & 85.1 \\ \bottomrule
% \end{tabular}
% }
% \end{table*}
\subsection{Ablation Study}
\textbf{Impact of the HSP Module Placement.}
We compare different HSP placement strategies, as summarized in Figure~\ref{sec5-fig2}(b). We denote the positions at one-quarter, one-half, and three-quarters of the backbone depth as $L_1$, $L_2$, and $L_3$, respectively. Applying HSP at $L_2$ generally outperforms single placement at $L_1$ or $L_3$, while the $L_3$ configuration achieves the weakest performance. Applying HSP at all three positions ($L_1{+}L_2{+}L_3$) also provides no further improvement, whereas the $L_1{+}L_2$ configuration achieves the best overall results. These results suggest that HSP is most effective on shallow-to-intermediate features that preserve fine-grained forensic cues, such as texture inconsistencies, blending boundaries, and geometric distortions. In contrast, early features at $L_1$ contain substantial low-level variations, while deeper features at $L_3$ encode more global semantics and are less suitable for fine-grained forgery-factor modeling. We therefore adopt $L_1{+}L_2$ as the default placement strategy.

\textbf{Incremental Improvement of the Proposed Designs.} We progressively introduce prototype reconstruction $\mathcal{L}_{\mathrm{rec}}$, prototype orthogonality $\mathcal{L}_{\mathrm{orth}}$, and assignment balance $\mathcal{L}_{\mathrm{bal}}$. As shown in Table~\ref{tab:exp-ablation-components}, $\mathcal{L}_{\mathrm{rec}}$ substantially improves the baseline, while $\mathcal{L}_{\mathrm{orth}}$ further improves performance by encouraging factor independence. Adding $\mathcal{L}_{\mathrm{bal}}$ achieves the best results across all benchmarks, confirming that balanced prototype utilization promotes a higher-effective-rank representation and improves cross-domain generalization.

\begin{table}[t]
  \centering
  \renewcommand{\arraystretch}{0.85}
  \caption{Ablation studies (ACC, \%) regarding the proposed HSP-CLIP with loss constraints.}
  \label{tab:exp-ablation-components}
  \small
  \resizebox{0.92\linewidth}{!}{%
  \begin{tabular}{ccc|c|c|c|c}
    \toprule
    $\mathcal{L}_{rec}$ & $\mathcal{L}_{orth}$ & $\mathcal{L}_{bal}$ & CDF-v2 & DFDCP & DFDC & InSwap \\
    \midrule
    \xmark & \xmark & \xmark & 78.6 & 77.9 & 72.8 & 79.8  \\
    \checkmark & \xmark & \xmark & 81.3 & 80.3 & 76.2 & 81.5  \\
    \checkmark & \checkmark & \xmark & 82.1 & 81.0 & 77.1 & 82.1 \\
    % \checkmark & \xmark & \checkmark & 81.5 & 90.2 & 80.3 & 89.2 & 76.8 & 86.6 & 82.2 & 90.3 & 83.5 & 90.4 & 80.9 & 89.3 \\
    \checkmark & \checkmark & \checkmark & 82.8 & 81.6 & 77.5 & 82.9 \\
    \bottomrule
  \end{tabular}
  }
\end{table}

% \begin{table}[t]
% \centering
% \renewcommand{\arraystretch}{0.8}
% \setlength{\tabcolsep}{4.5pt}
% \caption{Video-level AUC (\%) comparison of Effort with and without HSP on selected cross-domain benchmarks.}
% \label{tab:exp-effort-plugin}
% \resizebox{\columnwidth}{!}{%
% \begin{tabular}{lcccccc}
% \toprule
% Method & DFD & DFDC & SimSwap &
% e4s & FSGAN & \textbf{Avg.all} \\
% \midrule
% Effort & 96.5 & 84.3 & 92.6 & 98.3 & 95.7 & 93.5 \\
% \rowcolor{iceblue}
% +HSP & \textbf{98.0} & \textbf{87.8} & \textbf{94.6} &
% \textbf{99.0} & \textbf{98.0} & \textbf{95.3} \\
% \bottomrule
% \end{tabular}%
% }
% \end{table}

\begin{figure}[t]
    \centering
    \includegraphics[width=0.92\linewidth]{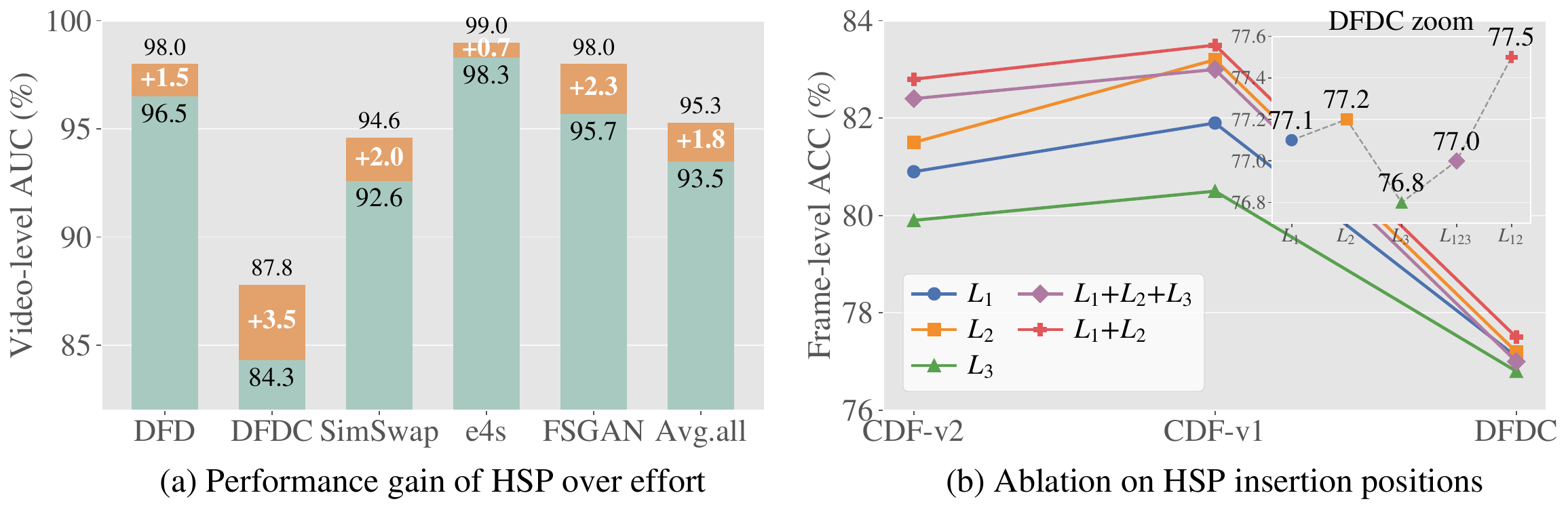}
    \caption{Performance gains of HSP over Effort and ablation study of HSP insertion positions. Figure~(a) and (b) report video-level AUC (\%) and frame-level ACC (\%).}
    \label{sec5-fig2}
\end{figure}

\begin{table}[t]
\caption{Ablation studies (\%) of HSP integrated into different vision foundation models (VFMs).}
\label{tab:exp-backbone-compat}
\centering
\setlength{\tabcolsep}{4pt}
\renewcommand{\arraystretch}{0.7}

\resizebox{0.9\columnwidth}{!}{%
\begin{tabular}{lcccccc}
\toprule
\multirow{2}{*}{Method}
& \multicolumn{2}{c}{CDF-v2}
& \multicolumn{2}{c}{CDF-v1}
& \multicolumn{2}{c}{FSGAN} \\
\cmidrule(r){2-3}
\cmidrule(lr){4-5}
\cmidrule(l){6-7}
& ACC & AUC & ACC & AUC & ACC & AUC \\
\midrule

ViT-Base
& 70.3 & 79.3
& 68.8 & 76.4
& 76.9 & 84.9 \\

+Ours
& 77.4\textsuperscript{\textcolor{growthgreen}{+7.1}}
& 82.7\textsuperscript{\textcolor{growthgreen}{+3.4}}
& 75.6\textsuperscript{\textcolor{growthgreen}{+6.8}}
& 83.5\textsuperscript{\textcolor{growthgreen}{+7.1}}
& 82.7\textsuperscript{\textcolor{growthgreen}{+5.8}}
& 90.4\textsuperscript{\textcolor{growthgreen}{+5.5}} \\
\midrule

SigLIP
& 69.9 & 82.8
& 72.8 & 82.4
& 85.3 & 90.7 \\

+Ours
& 74.9\textsuperscript{\textcolor{growthgreen}{+5.0}}
& 85.9\textsuperscript{\textcolor{growthgreen}{+3.1}}
& 77.2\textsuperscript{\textcolor{growthgreen}{+4.4}}
& 86.1\textsuperscript{\textcolor{growthgreen}{+3.7}}
& 87.9\textsuperscript{\textcolor{growthgreen}{+2.6}}
& 94.4\textsuperscript{\textcolor{growthgreen}{+3.7}} \\
\midrule

CLIP
& 78.6 & 84.5
& 79.2 & 87.4
& 88.7 & 93.9 \\

+Ours
& 82.8\textsuperscript{\textcolor{growthgreen}{+4.2}}
& 90.9\textsuperscript{\textcolor{growthgreen}{+6.4}}
& 83.5\textsuperscript{\textcolor{growthgreen}{+4.3}}
& 91.7\textsuperscript{\textcolor{growthgreen}{+4.3}}
& 90.7\textsuperscript{\textcolor{growthgreen}{+2.0}}
& 95.9\textsuperscript{\textcolor{growthgreen}{+2.0}} \\

\bottomrule
\end{tabular}%
}
\end{table}

% \begin{table}[t]
% \centering
% \renewcommand{\arraystretch}{0.8}
% \setlength{\tabcolsep}{10pt}
% \caption{Ablation study on the insertion positions of HSP. Results are reported in frame-level ACC (\%).}
% \label{tab:exp-ablation-hsp-position}
% \resizebox{\columnwidth}{!}{%
% \begin{tabular}{lcccc}
% \toprule
% \textbf{HSP Position} & CDF-v2 & CDF-v1 & DFDC & FSGAN \\
% \midrule
% $L_1$             & 80.9 & 81.9 & 77.1 & 90.2 \\
% $L_2$             & 81.5 & 83.2 & 77.2 & 90.4 \\
% $L_3$             & 79.9 & 80.5 & 76.8 & 90.0 \\
% $L_1{+}L_2{+}L_3$ & 82.4 & 83.0 & 77.0 & 90.1 \\
% $L_1{+}L_2$       & \textbf{82.8} & \textbf{83.5} & \textbf{77.5} & \textbf{90.7} \\
% \bottomrule
% \end{tabular}%
% }
% \end{table}

\begin{figure}[t]
    \centering
    \includegraphics[width=0.92\linewidth]{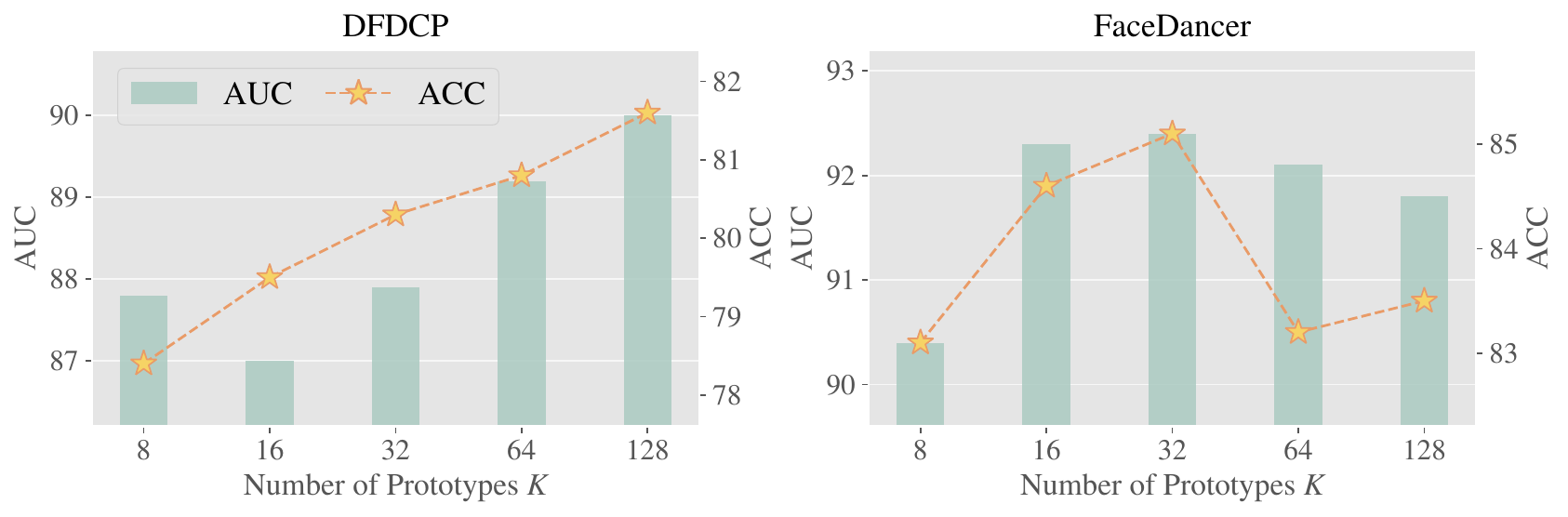}
    \caption{Impact of different numbers of prototypes ($K$).}
    \label{sec5-fig1}
\end{figure}

\textbf{Compatibility with Various Vision Foundation Models.}
We evaluate HSP on three representative backbones: ViT-Base~\cite{vit}, SigLIP~\cite{siglip}, and CLIP~\cite{clip}. As shown in Table~\ref{tab:exp-backbone-compat}, HSP consistently improves both ACC and AUC across all backbones and datasets, demonstrating its compatibility with different vision foundation models. 

\textbf{Compatibility with Existing Detectors.} We further integrate it into Effort~\cite{effort}, a strong recent generalizable deepfake detector, and report video-level AUC results in Figure~\ref{sec5-fig2}(a). HSP improves the average AUC from 93.5\% to 95.3\% and brings consistent gains across the benchmarks. These results indicate that HSP can serve as a general plug-in regularization module for the advanced detectors.

\textbf{Impact of the Number of Prototypes $K$.} We evaluate five prototype settings as shown in Figure~\ref{sec5-fig1}. DFDCP and FaceDancer achieve their best overall performance at $K=128$ and $K=32$, respectively. Complete results are provided in Appendix~\ref{a-d}. Cross-dataset evaluation entails broader shifts in identities and generation pipelines, benefiting from a larger prototype space that more finely models diverse forgery ProDe. In contrast, cross-manipulation evaluation mainly varies the manipulation algorithm while preserving similar facial content, for which moderate capacity is sufficient; larger $K$ may instead introduce redundant directions and weaken generalization. This observation is consistent with our theoretical analysis that increasing $K$ enlarges the maximum effective dimensionality supported by HSP and mitigates collapse into a few source-specific shortcut directions. We adopt $K=128$ as the default setting, prioritizing the more challenging cross-dataset generalization.

\textbf{Additional Analysis.} Appendices~\ref{a-g} and~\ref{a-d} analyze robustness and loss-weight configurations, respectively, while Appendix~\ref{a-e} provides further evidence of balanced prototype utilization, high effective rank, and factor diversity.

% \textbf{Compatibility with Various Vision Foundation Models.} We evaluate HSP on three representative backbones: ViT-Base~\cite{vit}, SigLip~\cite{siglip}, and CLIP~\cite{clip}. As shown in Table~\ref{tab:exp-backbone-compat}, HSP consistently improves both ACC and AUC across all backbones and datasets. With ViT-Base, HSP achieves the largest gain of 7.1 percentage points in ACC on CDF-v2 and improves AUC on CDF-v1 by 7.1 points. For SigLip, it raises CDF-v2 ACC and AUC by 5.0 and 3.1 points, respectively, while reaching 94.4\% AUC on FSGAN. Using CLIP, our default backbone, HSP further improves CDF-v2 ACC and AUC by 4.2 and 6.4 points, and achieves the best overall performance of 95.9\% AUC on FSGAN. These consistent gains demonstrate that HSP is broadly compatible with different vision foundation models.

% \textbf{Compatibility with Existing Detectors. }To further verify the generality of HSP, we integrate it into Effort~\cite{effort}, a strong recent generalizable deepfake detector, and report the video-level AUC results in Table~\ref{tab:exp-effort-plugin}. As shown, HSP improves the average performance from 93.5\% to 95.3\%, with gains observed on most benchmarks. In particular, the improvement is notable on challenging datasets such as CDF-v1, DFDC, FaceDancer, and InSwap. These results indicate that HSP is not restricted to a specific backbone or detector design, but can serve as a general plug-in regularization module for enhancing the transferability of forgery representations.

\section{Conclusion}
\label{sec:conclusion}
\par In this paper, we identify a critical bottleneck in generalizable deepfake detection: \textit{forensic representation collapse}, a failure mechanism where detectors over-index on volatile, dataset-specific shortcuts. To address this, we propose Hierarchical Spatial Prototype (HSP), a novel framework that explicitly models the forgery manifold via a set of learnable orthogonal prototypes. This design increases the effective capacity of the forensic representation from an information-bottleneck perspective, enabling the model to capture a diverse and structurally independent set of forensic cues. It achieves state-of-the-art generalization on prominent benchmarks, advancing the reliability of face forgery detection and helping mitigate the growing risks posed by generative AI.

\bibliography{bib}

\appendix
\clearpage
\section{Theoretical Interpretation of the Diagnostic Metrics}
\label{app:diagnostic_theory}

\begin{itemize}
    \item \textbf{Cluster-level Fake Density ($\mathcal{H}$)} quantifies the concentration of fake samples within each local region of the feature space. It is conceptually related to cluster purity, but is designed to capture whether forgery evidence forms locally coherent and label-enriched forensic regions rather than merely geometrically compact clusters~\cite{davies1979cluster}.
    
    \item \textbf{Cluster--Label Dependence ($\mathcal{I}$)} measures how strongly the discovered feature regions are associated with manipulation labels at the global level. It is closely related in spirit to mutual-information-based dependence measures~\cite{vinh2010information}, and reveals the extent to which the learned cluster structure contributes to the final discrimination between real and fake samples.
    
    \item \textbf{Cross-Dataset Fake Density Correlation ($\mathcal{P}$)} evaluates whether the same forgery-enriched regions remain salient across datasets with different manipulation pipelines and distribution shifts. This metric is intended to quantify structural invariance of forensic patterns across domains~\cite{wang2021oodsurvey}, thereby connecting representation analysis to cross-dataset generalization.
\end{itemize}

Taken together, $\mathcal{H}$, $\mathcal{I}$, and $\mathcal{P}$ capture three complementary aspects of forgery representations. We further provide formal interpretations of each metric and discuss why none is sufficient in isolation.

\paragraph{Notation.}
Let $Y\in\{0,1\}$ denote the forensic label, where $Y=1$ represents a fake sample, and let $C\in\{1,\ldots,N_c\}$ denote the cluster assignment of a spatial token. We denote the global fake prior by $\pi = \Pr(Y=1)$. For class-balanced evaluation, $\pi=1/2$. For the $c$-th feature cluster, its population-level fake density is
\begin{equation}
    \mathcal{H}(c)
    =
    \Pr(Y=1\mid C=c).
    \label{eq:population_fake_density}
\end{equation}
The empirical definition used in the main paper is the corresponding sample estimate of Eq.~\eqref{eq:population_fake_density}.

\subsection{Local Concentration Measured by $\mathcal{H}$}

We first establish the interpretation of $\mathcal{H}$ as a measure of local label enrichment.

\begin{lemma}[Fake density under label-independent clustering]
\label{lem:h_independence}
If the cluster assignment $C$ is statistically independent of the forensic label $Y$, then
$\mathcal{H}(c)=\pi$
for every cluster $c$ with $\Pr(C=c)>0$.
\end{lemma}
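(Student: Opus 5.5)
The argument is a direct application of the definition of conditional probability together with the factorization of the joint law under independence. Fix any cluster $c$ with $\Pr(C=c)>0$. By Eq.~\eqref{eq:population_fake_density}, $\mathcal{H}(c)=\Pr(Y=1\mid C=c)$. This conditional probability is well defined precisely because $\Pr(C=c)>0$, and it equals
\begin{equation}
\mathcal{H}(c)=\frac{\Pr(Y=1,\,C=c)}{\Pr(C=c)}.
\end{equation}

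Next, I invoke the independence hypothesis. Statistical independence of $C$ and $Y$ means $\Pr(Y=y,\,C=c)=\Pr(Y=y)\Pr(C=c)$ for all $y\in\{0,1\}$ and all $c\in\{1,\ldots,N_c\}$. Substituting $y=1$ into the numerator, the factor $\Pr(C=c)$ cancels, since it is nonzero. This gives $\mathcal{H}(c)=\Pr(Y=1)=\pi$. Because $c$ was an arbitrary cluster of positive mass, the claim holds for every such cluster.

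There is no real obstacle here. The only point requiring care is the positivity condition. Clusters with $\Pr(C=c)=0$ are empty, and for them the conditional probability, and hence $\mathcal{H}(c)$, is undefined. This is why the statement restricts to clusters with $\Pr(C=c)>0$.

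I would also add a brief remark connecting this to the empirical estimator used in the main text. Under independence, the empirical fraction of fake patches in a nonempty cluster $\mathcal{S}_{n_c}$ is an unbiased, consistent estimate of $\pi$. In particular, for class-balanced evaluation it concentrates near $1/2$. This interprets the near-chance values of $\bar{\mathcal{H}}$ observed for the pretrained CLIP baseline in Table~\ref{tab:analysis-metrics} as evidence of approximately label-independent clustering.
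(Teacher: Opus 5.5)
Your proof is correct and is the same argument the paper gives: write $\mathcal{H}(c)$ as $\Pr(Y=1,C=c)/\Pr(C=c)$, factor the numerator using independence, and cancel the nonzero $\Pr(C=c)$. Your extra remark on the empirical estimator goes beyond the lemma and should be hedged: $\bar{\mathcal{H}}$ in Table~\ref{tab:analysis-metrics} averages only the top-$M$ most fake-dominated clusters, and that selection biases it upward from $\pi$ even under independence; moreover, patches from the same image share a label, so concentration needs many images, not just many patches.
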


\begin{proof}
By statistical independence,
\begin{equation}
\begin{aligned}
\mathcal{H}(c)
&=
\frac{\Pr(Y=1,C=c)}{\Pr(C=c)} \\
&=
\frac{\Pr(Y=1)\Pr(C=c)}{\Pr(C=c)}
=
\Pr(Y=1)
=
\pi.
\end{aligned}
\end{equation}
\end{proof}

Lemma~\ref{lem:h_independence} provides a natural reference level for interpreting fake density. If the learned feature partition contains no label-relevant structure, every nonempty cluster has the same fake proportion as the global class prior $\pi$, which equals $0.5$ under balanced evaluation. Thus, clusters with $\mathcal{H}(c)>\pi$ correspond to local feature regions enriched with forged samples. This interpretation is further supported by
\begin{equation}
\frac{\Pr(C=c\mid Y=1)}
{\Pr(C=c\mid Y=0)}
=
\frac{\mathcal{H}(c)}
{1-\mathcal{H}(c)}
\cdot
\frac{1-\pi}{\pi},
\label{eq:density_likelihood_ratio}
\end{equation}
showing that, for a fixed prior, a larger $\mathcal{H}(c)$ provides stronger local evidence for the fake class. However, $\mathcal{H}$ does not account for cluster probability mass. A small cluster may attain a high fake density while contributing little to the global decision, motivating the complementary use of $\mathcal{I}$.

\subsection{Global Discriminability Measured by $\mathcal{I}$}

The Cluster--Label Dependence is defined as the mutual information between $C$ and $Y$:
\begin{equation}
    \mathcal{I}
    =
    I(C;Y)
    =
    \sum_{c=1}^{N_c}
    \sum_{y\in\{0,1\}}
    p(c,y)
    \log
    \frac{p(c,y)}{p(c)p(y)}.
    \label{eq:mi_definition_appendix}
\end{equation}

The following standard identities clarify its diagnostic meaning:
\begin{equation}
\begin{split}
    I(C;Y)
    &=
    H(Y)-H(Y\mid C) \\
    &=
    \sum_{c=1}^{N_c}
    p(c)
    D_{\mathrm{KL}}
    \left(
        p(Y\mid C=c)
        \,\Vert\,
        p(Y)
    \right).
\end{split}
\label{eq:mi_equivalent_forms}
\end{equation}

\begin{lemma}[Characterization of zero cluster--label dependence]
\label{lem:mi_zero}
The following statements are equivalent:
\begin{enumerate}
    \item $I(C;Y)=0$;
    \item $C$ and $Y$ are statistically independent;
    \item $p(Y\mid C=c)=p(Y)$ for every cluster $c$ with $p(c)>0$.
\end{enumerate}
\end{lemma}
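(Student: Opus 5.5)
I will prove the equivalences through the cycle $(1)\Rightarrow(2)\Rightarrow(3)\Rightarrow(1)$, working with the finite joint distribution $p(c,y)$ over $\{1,\ldots,N_c\}\times\{0,1\}$. Throughout I adopt the usual convention that terms with $p(c,y)=0$ contribute zero to Eq.~\eqref{eq:mi_definition_appendix}.

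\textbf{Step $(1)\Rightarrow(2)$.} This is the only nontrivial step, and I expect it to be the main obstacle. The difficulty is not the calculation but making sure the zero-probability conventions are handled cleanly. I will use the second form in Eq.~\eqref{eq:mi_equivalent_forms}, which writes the mutual information as
\[
I(C;Y)=\sum_c p(c)\,D_{\mathrm{KL}}\bigl(p(Y\mid C=c)\,\Vert\,p(Y)\bigr).
\]
If one prefers not to take that identity for granted, it follows by writing $p(c,y)=p(c)p(y\mid c)$ and restricting the sum to clusters with $p(c)>0$.

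Each KL term is nonnegative by Gibbs' inequality. That inequality in turn comes from Jensen's inequality applied to the strictly concave function $\log$:
\[
-D_{\mathrm{KL}}(q\Vert r)=\sum_{y:\,q(y)>0} q(y)\log\frac{r(y)}{q(y)}\le \log\sum_{y:\,q(y)>0} r(y)\le 0.
\]
Equality holds if and only if $q=r$. This uses strict concavity, together with the fact that on the support of $q$ the values of $r$ must sum to one.

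Since the sum of nonnegative terms weighted by $p(c)>0$ vanishes, every such term vanishes. Hence $p(Y\mid C=c)=p(Y)$ for every cluster with $p(c)>0$. This gives (3) directly. From (3), for clusters with $p(c)>0$ we get $p(c,y)=p(c)p(y\mid c)=p(c)p(y)$. For clusters with $p(c)=0$, both sides are zero. So the joint distribution factorizes and (2) holds.

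\textbf{Step $(2)\Rightarrow(3)$.} This is the same computation as in Lemma~\ref{lem:h_independence}, now carried out for both values of $y$. For $p(c)>0$,
\[
p(y\mid c)=\frac{p(c,y)}{p(c)}=\frac{p(c)p(y)}{p(c)}=p(y).
\]

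\textbf{Step $(3)\Rightarrow(1)$.} Under (3), every logarithm $\log\frac{p(c,y)}{p(c)p(y)}=\log\frac{p(y\mid c)}{p(y)}$ with $p(c,y)>0$ equals $\log 1=0$. All remaining terms are zero by convention, so $I(C;Y)=0$. This closes the cycle.
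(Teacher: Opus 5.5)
Your proof is correct and rests on the same key fact as the paper's: nonnegativity of the KL divergence, with equality exactly when the two distributions coincide. The paper applies this once to the joint form $I(C;Y)=D_{\mathrm{KL}}(p(C,Y)\,\Vert\,p(C)p(Y))$, which gives $(1)\Leftrightarrow(2)$ directly and leaves $(2)\Leftrightarrow(3)$ as a definitional remark. You instead apply it cluster by cluster to $\sum_c p(c)\,D_{\mathrm{KL}}(p(Y\mid C=c)\,\Vert\,p(Y))$, so you reach (3) first and then reassemble the factorization, treating empty clusters separately. Your route is slightly longer but more self-contained: you derive Gibbs' inequality and its equality case from Jensen and make the zero-probability conventions explicit, whereas the paper takes these as standard.
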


\begin{proof}
Mutual information can be written as
\begin{equation}
    I(C;Y)
    =
    D_{\mathrm{KL}}
    \left(
        p(C,Y)
        \,\Vert\,
        p(C)p(Y)
    \right).
\end{equation}
The non-negativity of the Kullback--Leibler divergence gives
$I(C;Y)\geq 0$, with equality if and only if
\begin{equation}
    p(C,Y)=p(C)p(Y)
\end{equation}
almost everywhere. This is exactly the definition of statistical independence and is equivalent to
$p(Y\mid C=c)=p(Y)$ for all clusters with nonzero probability.
\end{proof}

Equation~\eqref{eq:mi_equivalent_forms} also shows why $\mathcal{I}$ complements $\mathcal{H}$. The fake density of cluster $c$ determines the conditional Bernoulli distribution $p(Y\mid C=c)
    =
    \operatorname{Bernoulli}\bigl(\mathcal{H}(c)\bigr).$ Consequently,
\begin{equation}
    \mathcal{I}
    =
    \sum_{c=1}^{N_c}
    p(c)
    D_{\mathrm{KL}}
    \left(
        \operatorname{Bernoulli}\bigl(\mathcal{H}(c)\bigr)
        \,\Vert\,
        \operatorname{Bernoulli}(\pi)
    \right).
    \label{eq:mi_density_relation}
\end{equation}
Thus, $\mathcal{I}$ aggregates the deviation of cluster-specific fake densities from the global class prior, weighted by the occurrence probability $p(c)$. A small but highly fake-enriched cluster has limited influence, whereas sufficiently populated real- and fake-enriched regions contribute to stronger global dependence. Moreover, since $I(C;Y)=H(Y)-H(Y\mid C)$, larger $\mathcal{I}$ indicates that feature-region assignments reduce uncertainty about forensic labels. Therefore, $\mathcal{I}$ measures the global discriminative relevance of the learned partition rather than the purity of individual regions.

\subsection{Cross-Domain Stability Measured by $\mathcal{P}$}

Let $\mathcal{D}_s$ and $\mathcal{D}_t$ denote the source and target domains, respectively. The cluster centroids are estimated only from $\mathcal{D}_s$ and are then fixed when assigning target-domain tokens. This produces aligned fake-density profiles
\begin{equation}
    \mathbf{V}_s
    =
    \left[
        \mathcal{H}_s(1),
        \ldots,
        \mathcal{H}_s(N_c)
    \right]^\top,
\end{equation}
\begin{equation}
    \mathbf{V}_t
    =
    \left[
        \mathcal{H}_t(1),
        \ldots,
        \mathcal{H}_t(N_c)
    \right]^\top.
\end{equation}
Because both domains use the same source-derived centroids, each entry corresponds to the same source-defined feature region. Let $\mathcal{M}$ denote the top-$M$ clusters with the largest source-domain fake densities. We define $\mathbf{v}_s=\mathbf{V}_s[\mathcal{M}]$ and $\mathbf{v}_t=\mathbf{V}_t[\mathcal{M}]$. When both vectors have nonzero variance, their Pearson correlation is
\begin{equation}
    \mathcal{P}
    =
    \frac{
        \left\langle
            \mathbf{v}_s-\bar v_s\mathbf{1},
            \mathbf{v}_t-\bar v_t\mathbf{1}
        \right\rangle
    }{
        \left\|
            \mathbf{v}_s-\bar v_s\mathbf{1}
        \right\|_2
        \left\|
            \mathbf{v}_t-\bar v_t\mathbf{1}
        \right\|_2
    }.
    \label{eq:p_appendix}
\end{equation}

\begin{proposition}[Affine structural invariance]
\label{prop:affine_invariance}
Suppose the target-domain fake-density profile satisfies
$\mathbf{v}_t=a\mathbf{v}_s+b\mathbf{1}$,
where $a\neq0$ and $\mathbf{v}_s$ has nonzero variance. Then $\mathcal{P}=\operatorname{sign}(a)$, i.e., $\mathcal{P}=1$ for $a>0$ and $\mathcal{P}=-1$ for $a<0$.
\end{proposition}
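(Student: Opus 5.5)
The plan is to substitute the affine relation directly into Eq.~\eqref{eq:p_appendix} and simplify the numerator and denominator separately.

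First, compute the target mean. From $\mathbf{v}_t=a\mathbf{v}_s+b\mathbf{1}$ and the linearity of the average over the $M$ selected entries, we get $\bar v_t=a\bar v_s+b$. Hence the centered target vector is
\begin{equation*}
\mathbf{v}_t-\bar v_t\mathbf{1}=a(\mathbf{v}_s-\bar v_s\mathbf{1}).
\end{equation*}
Write $\mathbf{u}=\mathbf{v}_s-\bar v_s\mathbf{1}$. Because $\mathbf{v}_s$ has nonzero variance, $\mathbf{u}\neq\mathbf{0}$. Since also $a\neq0$, the centered target vector is nonzero, so the target profile has nonzero variance too. This ensures the Pearson formula is well defined, which is the only point needing care.

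Next, substitute into the formula. The numerator becomes $\langle\mathbf{u},a\mathbf{u}\rangle=a\|\mathbf{u}\|_2^2$. The denominator becomes $\|\mathbf{u}\|_2\,\|a\mathbf{u}\|_2=|a|\,\|\mathbf{u}\|_2^2$, using the homogeneity of the norm.

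Finally, divide by the nonzero quantity $\|\mathbf{u}\|_2^2$. This gives $\mathcal{P}=a/|a|=\operatorname{sign}(a)$, which equals $1$ for $a>0$ and $-1$ for $a<0$.

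No real obstacle is expected. The argument only requires confirming non-degeneracy, i.e., that both centered vectors are nonzero, before dividing.
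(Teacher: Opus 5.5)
Your proof is correct and follows the same route as the paper's: both center the target profile to get $\mathbf{v}_t-\bar v_t\mathbf{1}=a(\mathbf{v}_s-\bar v_s\mathbf{1})$ and substitute into the Pearson formula to obtain $a/|a|=\operatorname{sign}(a)$. Your explicit check that the centered target vector is nonzero, so that the correlation is well defined, is a small addition the paper leaves implicit.
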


\begin{proof}
The centered target vector satisfies
\begin{align}
\mathbf{v}_t-\bar v_t\mathbf{1}
&=
a(\mathbf{v}_s-\bar v_s\mathbf{1}).
\end{align}
Substituting into Eq.~\eqref{eq:p_appendix} yields $\mathcal{P}=\frac{a}{|a|}=\operatorname{sign}(a)$.
\end{proof}

Proposition~\ref{prop:affine_invariance} shows that $\mathcal{P}$ measures the preservation of the \emph{relative organization} of forgery-enriched regions rather than identical density magnitudes across domains. A target domain may exhibit lower fake densities due to increased difficulty, yet $\mathcal{P}$ remains high if source-dominant fake-enriched clusters preserve their relative ordering. Conversely, a low or negative $\mathcal{P}$ indicates that source-domain forgery regions lose their association with fake samples after domain shift, suggesting reliance on unstable domain-specific shortcuts. The top-$M$ restriction further focuses the analysis on the most forgery-relevant regions. Computing correlation over all clusters may be dominated by semantic or label-neutral regions with $\mathcal{H}_s(c)\approx\mathcal{H}_t(c)\approx\pi$, which provide limited forensic information. Selecting $\mathcal{M}$ from the source domain therefore highlights source-derived forgery evidence while avoiding target-domain information leakage.

% \begin{remark}[Degenerate density profiles]
% If either $\mathbf{v}_s$ or $\mathbf{v}_t$ has zero variance, Pearson correlation is undefined. In practice, this corresponds to a degenerate profile in which all selected clusters have identical fake densities. Such a representation contains no meaningful relative density structure over the selected regions. These cases should therefore be treated as structurally non-informative rather than being interpreted as evidence of high cross-domain stability.
% \end{remark}

\begin{figure*}[t]
    \centering
    \includegraphics[width=0.85\linewidth]{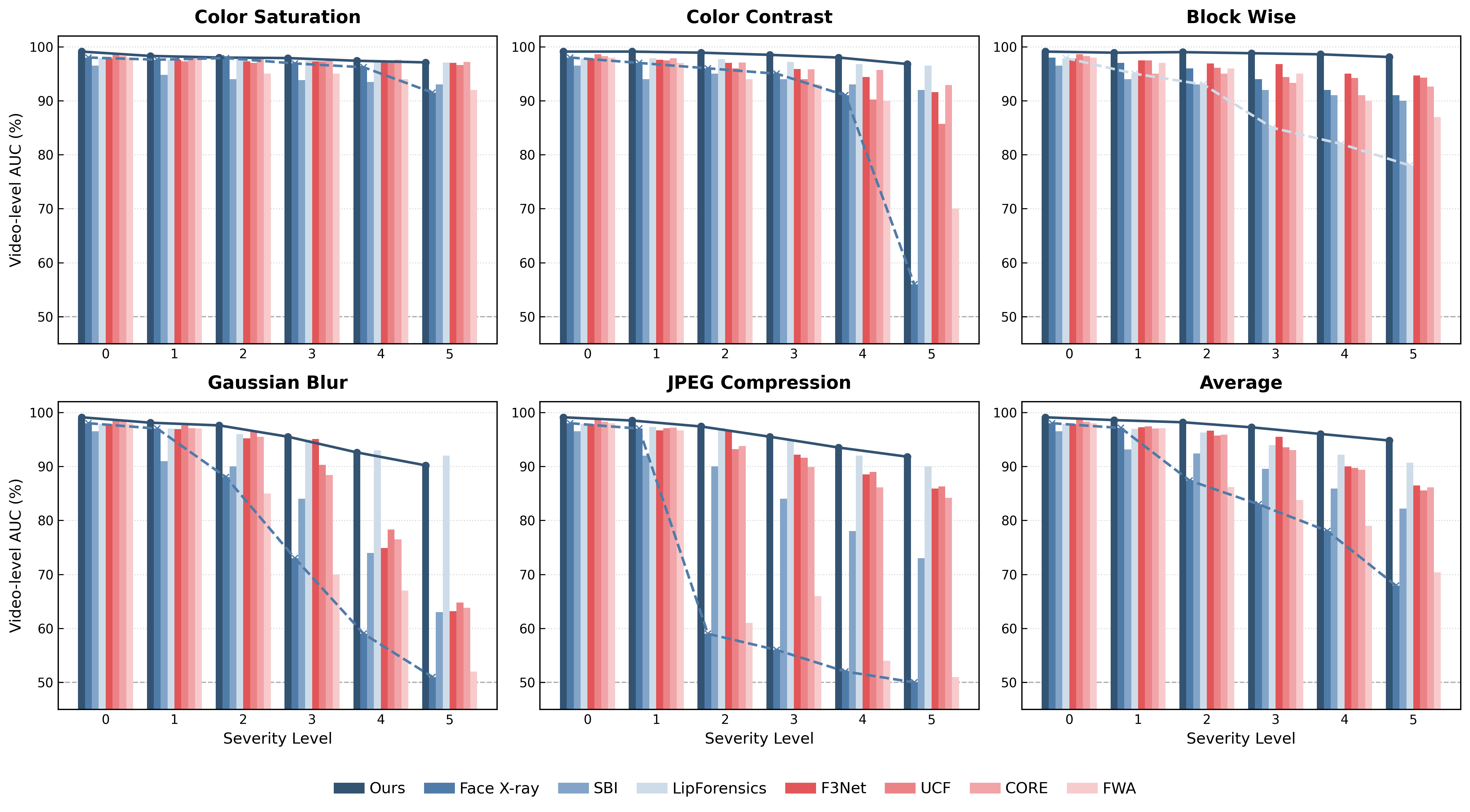}
    \caption{Robustness evaluation under various unseen perturbations. The grouped bar charts illustrate the video-level AUC (\%) performance of our method versus competing approaches across five different perturbation types: Color Saturation, JPEG Compression, Color Contrast, Block Wise, and Gaussian Blur. Performance is plotted against increasing severity levels from 0 to 5. Overall, our method demonstrates the strongest comprehensive robustness, achieving highly competitive performance across different unseen perturbation types.}
    \label{fig:exp-robustness}
\end{figure*}

\subsection{The Complementarity of Three Diagnostics}
We then show through counterexamples that no individual diagnostic is sufficient to characterize a transferable forensic representation.

\begin{proposition}[Non-sufficiency of individual diagnostics]
\label{prop:non_sufficiency}
Each of $\mathcal{H}$, $\mathcal{I}$, and $\mathcal{P}$ can attain a high value while the representation fails to satisfy at least one other desirable property of a transferable forensic representation.
\end{proposition}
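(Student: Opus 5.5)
The proof is constructive: for each diagnostic I build an explicit toy joint distribution over (cluster, label, domain) in which that diagnostic is high while another desirable property fails. Throughout I work at the population level with balanced prior $\pi=1/2$, so $\mathcal{H}(c)=\Pr(Y=1\mid C=c)$. Lemma~\ref{lem:mi_zero}, Eq.~\eqref{eq:mi_density_relation} and Proposition~\ref{prop:affine_invariance} then let me evaluate each diagnostic in closed form.

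\textbf{High $\mathcal{H}$, low $\mathcal{I}$.} Take one tiny cluster $c_0$ with mass $p(c_0)=\varepsilon$ containing only fake tokens, so $\mathcal{H}(c_0)=1$. Split the remaining fake and real mass evenly over the other clusters; their densities are $(1/2-\varepsilon)/(1-\varepsilon)\approx 1/2$. The top-$M$ average $\bar{\mathcal{H}}$ can be pushed arbitrarily high by replicating such pure tiny clusters $M$ times with total mass $M\varepsilon$. By Eq.~\eqref{eq:mi_density_relation}, $\mathcal{I}\le M\varepsilon\log 2+\sum_{c} p(c)\,D_{\mathrm{KL}}(\cdot)$, where each KL term is $O(\varepsilon^2)$, so $\mathcal{I}\to 0$ as $\varepsilon\to0$. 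Thus pure clusters give almost no global discriminability, in line with the remark after Lemma~\ref{lem:h_independence}.

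\textbf{High $\mathcal{I}$, low $\mathcal{P}$.} In the source domain, let the clusters be perfectly label-pure with half the mass fake, so $\mathcal{I}=H(Y)=\log 2$, the maximum. In the target domain, using the same source centroids, choose a fake-density profile on $\mathcal{M}$ that is anti-aligned with the source one, for example $\mathbf{v}_t=-\mathbf{v}_s+b\mathbf{1}$. Since the source selection has all entries equal to $1$, I first perturb the source densities slightly, e.g.\ $1-\delta_m$ with distinct $\delta_m$, so that the variance is nonzero. Proposition~\ref{prop:affine_invariance} then gives $\mathcal{P}=-1$. This models a shortcut that is maximally informative in the source domain but whose fake regions lose or reverse their meaning under shift.

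\textbf{High $\mathcal{P}$, low $\mathcal{H}$ and $\mathcal{I}$.} Take source densities on $\mathcal{M}$ equal to $1/2+\eta a_m$ with small $\eta$ and distinct $a_m$. Balance the cluster masses so that the overall prior stays $1/2$ by adding real-enriched counterparts. Let the target be an exact affine copy with positive slope. Proposition~\ref{prop:affine_invariance} gives $\mathcal{P}=1$, yet $\bar{\mathcal{H}}\approx1/2$ and, by Eq.~\eqref{eq:mi_density_relation}, $\mathcal{I}=O(\eta^2)$. A representation can therefore be perfectly stable across domains while carrying essentially no forensic signal.

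The main obstacle I expect is bookkeeping. Each construction must stay a valid joint distribution: the marginal prior must remain $1/2$, target clusters must be nonempty, and the selected profiles must have nonzero variance so that $\mathcal{P}$ is defined. Balancing masses with complementary clusters is the fix I would try first. The KL bounds are routine once the densities are near $1/2$, using $D_{\mathrm{KL}}(\mathrm{Bern}(1/2+t)\Vert\mathrm{Bern}(1/2))=O(t^2)$.
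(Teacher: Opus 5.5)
Your proposal is correct and follows the paper's proof: it uses the same three constructions, namely a vanishing-mass pure-fake cluster handled via Eq.~\eqref{eq:mi_density_relation}, a source shortcut whose fake-enriched regions reverse in the target, and near-prior density profiles related by a positive affine map via Proposition~\ref{prop:affine_invariance}. Your version is slightly more careful than the paper's, since you bound the KL contributions of the remaining clusters and perturb the pure source densities so that the Pearson correlation is defined.
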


\begin{proof}
We consider three constructions.

\paragraph{High $\mathcal{H}$ but low $\mathcal{I}$.}
Suppose one cluster contains only fake tokens but has probability mass $\epsilon$, where $\epsilon\rightarrow 0$. Then its fake density satisfies $\mathcal{H}(c)=1.$ However, its contribution to Eq.~\eqref{eq:mi_density_relation} is weighted by $p(c)=\epsilon$ and therefore vanishes as $\epsilon\rightarrow 0$. Hence, a locally pure fake cluster need not induce globally discriminative structure.

\paragraph{High $\mathcal{H}$ and $\mathcal{I}$ but low $\mathcal{P}$.}
Suppose the source domain contains an artifact that perfectly separates fake and real samples, producing highly label-enriched and sufficiently populated clusters. Both $\mathcal{H}_s$ and $I_s(C;Y)$ can then be high. If the artifact is absent or reverses its label association in the target domain, the source-enriched clusters will not remain target-enriched, resulting in a low or negative $\mathcal{P}$. This construction corresponds to source-domain shortcut learning.

\paragraph{High $\mathcal{P}$ but low $\mathcal{H}$ and $\mathcal{I}$.}
Consider nonconstant vectors $\boldsymbol{\delta}_s$ and $\boldsymbol{\delta}_t$ satisfying
$\boldsymbol{\delta}_t=a\boldsymbol{\delta}_s$ for some $a>0$, and define
\begin{equation}
    \mathbf{v}_s
    =
    \pi\mathbf{1}
    +
    \epsilon\boldsymbol{\delta}_s,
    \qquad
    \mathbf{v}_t
    =
    \pi\mathbf{1}
    +
    \epsilon\boldsymbol{\delta}_t,
\end{equation}
where $\epsilon>0$ is arbitrarily small. Proposition~\ref{prop:affine_invariance} gives $\mathcal{P}=1$. However, as $\epsilon\rightarrow 0$, all cluster-specific fake densities converge to the class prior $\pi$. By Eq.~\eqref{eq:mi_density_relation}, $\mathcal{I}\rightarrow 0$, and the clusters become nearly label-neutral. Thus, high correlation alone does not imply useful forensic discrimination.
\end{proof}

These cases motivate the following joint interpretation.

\begin{proposition}[Joint diagnostic interpretation]
\label{prop:joint_interpretation}
Jointly high values of $\mathcal{H}$, $\mathcal{I}$, and $\mathcal{P}$ indicate three complementary representation properties:
\begin{enumerate}
    \item high $\mathcal{H}$ indicates locally fake-enriched feature regions;
    \item high $\mathcal{I}$ indicates strong dependence between the feature partition and forensic labels;
    \item high $\mathcal{P}$ indicates stable cross-domain organization of forgery-enriched regions.
\end{enumerate}
\end{proposition}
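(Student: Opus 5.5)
The proposition is interpretive, so I would make each of the three items precise using a result already proved, and then check that no item is redundant by citing Proposition~\ref{prop:non_sufficiency}. Throughout, ``high'' is read relative to the label-independent baseline: $\mathcal{H}(c)$ is compared with $\pi$, $\mathcal{I}$ with $0$, and $\mathcal{P}$ with $0$, where $1$ is the affine-invariant maximum.

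For item 1, I would start from Lemma~\ref{lem:h_independence}. Under label-independent clustering every nonempty cluster has $\mathcal{H}(c)=\pi$, so $\mathcal{H}(c)>\pi$ certifies departure from independence at cluster $c$. Eq.~\eqref{eq:density_likelihood_ratio} then turns this into a statement about local evidence. The map $h\mapsto h/(1-h)$ is strictly increasing, so $\mathcal{H}(c)>\pi$ holds exactly when $\Pr(C=c\mid Y=1)>\Pr(C=c\mid Y=0)$. Hence a high $\mathcal{H}$ over the top-$M$ clusters means those regions are fake-enriched.

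For item 2, I would use Lemma~\ref{lem:mi_zero}, which shows $\mathcal{I}=0$ exactly under independence. Together with $I(C;Y)=H(Y)-H(Y\mid C)$ from Eq.~\eqref{eq:mi_equivalent_forms}, a large $\mathcal{I}$ means a large reduction in label uncertainty given the partition. Eq.~\eqref{eq:mi_density_relation} shows this reduction is a mass-weighted sum of KL deviations of $\mathcal{H}(c)$ from $\pi$. Item 2 therefore says the dependence between partition and labels is global, not confined to negligible clusters.

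For item 3, I would invoke Proposition~\ref{prop:affine_invariance}: $\mathcal{P}=1$ whenever the target profile is a positive affine image of the source profile. I would add the Cauchy--Schwarz observation that $\mathcal{P}$ near $1$ forces the centered vectors $\mathbf{v}_t-\bar v_t\mathbf{1}$ and $\mathbf{v}_s-\bar v_s\mathbf{1}$ to be nearly positively collinear. This is what justifies reading $\mathcal{P}$ as preservation of the relative organization of source fake-enriched regions.

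Finally, I would note that the three conclusions are logically independent, using the constructions in the proof of Proposition~\ref{prop:non_sufficiency}. These show that each single metric can be high while another desirable property fails, so all three values are needed for the joint interpretation.

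The main obstacle is that ``high'' has no sharp threshold. I would handle this by stating each item as a comparison with its independence baseline, which the earlier lemmas supply, rather than as an absolute claim.
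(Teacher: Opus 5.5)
Your proposal matches the paper's justification, which likewise assembles Lemma~\ref{lem:h_independence} with Eq.~\eqref{eq:density_likelihood_ratio}, Lemma~\ref{lem:mi_zero} with Eqs.~\eqref{eq:mi_equivalent_forms}--\eqref{eq:mi_density_relation}, Proposition~\ref{prop:affine_invariance}, and the counterexamples of Proposition~\ref{prop:non_sufficiency}. Your Cauchy--Schwarz remark is a useful strengthening: it supplies the converse that the affine-invariance result alone does not give, namely that high $\mathcal{P}$ forces near positive collinearity of the centered profiles. One caveat is that those counterexamples show only that no single metric suffices, not full logical independence of the three properties; for instance, they do not show that high $\mathcal{I}$ can occur without fake-enriched clusters.
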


High $\mathcal{H}$, $\mathcal{I}$, and $\mathcal{P}$ do not guarantee low target-domain classification error. Instead, they capture complementary and theoretically interpretable properties expected in transferable forensic representations: \textit{local label enrichment} ($\mathcal{H}$), \textit{global discriminability} ($\mathcal{I}$), and \textit{cross-domain structural stability} ($\mathcal{P}$). Together, these diagnostics distinguish stable forgery-related structures from fragile dataset-specific shortcuts.

\section{The Impact of Different $M$ on Diagnostic Metrics}
\label{a-c}
We further evaluate the robustness of the proposed diagnostics by varying the number of selected fake-dominated clusters as $M\in\{10,20,30\}$. As shown in Table~\ref{tab:analysis-metrics-m}, HSP-CLIP consistently achieves higher cluster-level fake density $\bar{\mathcal{H}}$ and cross-dataset fake density correlation $\mathcal{P}$ than CLIP-FT across all target datasets and values of $M$, indicating more discriminative and cross-domain-stable forensic structures. The improvement in $\mathcal{P}$ is most pronounced at small $M$ and gradually narrows as more clusters are included. This behavior is consistent with the different representation structures learned by the two methods. CLIP-FT concentrates forgery evidence along a few dominant directions, making its most fake-enriched clusters more likely to encode source-specific shortcuts. Because a small $M$ primarily evaluates these core clusters, CLIP-FT exhibits relatively weak cross-domain stability. In contrast, HSP reduces reliance on a few source-specific directions through multi-factor forgery-space modeling, substantially improving the consistency of dominant forensic regions across domains. As $M$ increases, additional clusters with weaker discriminative relevance or generic visual content are included, which may dilute the sensitivity of $\mathcal{P}$ to shortcut-induced instability and narrow the gap between HSP-CLIP and CLIP-FT.

\begin{table}[t]
    \centering
    \small
    \caption{Quantitative comparison of the feature representations learned by
    CLIP-FT and HSP-CLIP under different values of $M$, measured by
    cluster-level fake density $\bar{\mathcal{H}}$ and cross-dataset fake
    density correlation $\mathcal{P}$.}
    \label{tab:analysis-metrics-m}
    \renewcommand{\arraystretch}{0.9}
    \setlength{\tabcolsep}{2pt}

    \resizebox{\columnwidth}{!}{%
    \begin{tabular}{lccc|ccc}
        \toprule
        \multirow{2}{*}{\textbf{Method}}
        & \multicolumn{3}{c|}{$\bar{\mathcal{H}}$}
        & \multicolumn{3}{c}{$\mathcal{P}$} \\
        \cmidrule(lr){2-4}
        \cmidrule(lr){5-7}
        & $M=10$ & $M=20$ & $M=30$
        & $M=10$ & $M=20$ & $M=30$ \\
        \midrule

        \multicolumn{7}{c}{\textbf{Celeb-DF-v2}} \\
        \midrule
        CLIP-FT
        & 0.996 & 0.973 & 0.907
        & 0.483 & 0.853 & 0.914 \\
        HSP-CLIP-FT
        & \textbf{0.998} & \textbf{0.985} & \textbf{0.936}
        & \textbf{0.832} & \textbf{0.960} & \textbf{0.934} \\
        \midrule

        \multicolumn{7}{c}{\textbf{DFDCP}} \\
        \midrule
        CLIP-FT
        & 0.965 & 0.950 & 0.861
        & 0.544 & 0.914 & 0.942 \\
        HSP-CLIP-FT
        & \textbf{0.993} & \textbf{0.968} & \textbf{0.883}
        & \textbf{0.917} & \textbf{0.976} & \textbf{0.979} \\
        \midrule

        \multicolumn{7}{c}{\textbf{DFDC}} \\
        \midrule
        CLIP-FT
        & 0.970 & 0.940 & 0.857
        & 0.523 & 0.748 & 0.913 \\
        HSP-CLIP-FT
        & \textbf{0.993} & \textbf{0.957} & \textbf{0.876}
        & \textbf{0.817} & \textbf{0.977} & \textbf{0.959} \\
        \bottomrule
    \end{tabular}%
    }
\end{table}

\begin{table*}[t]
\centering
\renewcommand{\arraystretch}{0.9}
\setlength{\tabcolsep}{5pt}
\caption{Performance comparison across different numbers of prototypes. Results are reported in frame-level ACC and AUC (\%). The best result for each dataset and metric is shown in \textbf{bold}. Blue cells highlight the selected high-performance region.}
\label{tab:exp-ablation-prototypes}
\resizebox{0.9\textwidth}{!}{%
\begin{tabular}{cc|ccccccccccc}
\toprule
\textbf{$K$} & \textbf{Metric} & \textbf{CD2} & \textbf{CD1} & \textbf{DFDCP} & \textbf{DFD} & \textbf{DFDC} & \textbf{SimSwap} & \textbf{UniFace} & \textbf{e4s} & \textbf{FaceDancer} & \textbf{FSGAN} & \textbf{InSwap} \\
\midrule
\multirow{2}{*}{8}
& ACC & 79.1 & 78.9 & 78.4 & \textbf{91.2} & 76.8 & 83.0 & 87.2 & 90.3 & 83.1 & 89.1 & 83.2 \\
& AUC & 89.5 & 90.6 & 87.8 & 94.1 & \textbf{87.2} & 89.2 & 93.9 & 96.8 & 90.4 & 95.5 & 90.3 \\
\midrule
\multirow{2}{*}{16}
& ACC & 81.6 & 81.3 & 79.5 & 90.9 & 76.3 & \textbf{85.0} & 87.7 & 90.1 & 84.6 & 90.2 & \textbf{84.2} \\
& AUC & 89.8 & 91.3 & 87.0 & 94.1 & 86.2 & \textbf{91.9} & \textbf{95.1} & 97.0 & 92.3 & 95.9 & \textbf{91.6} \\
\midrule
\multirow{2}{*}{32}
& ACC & 82.0 & 82.9 & 80.3 & 90.9 & 76.7 & 83.6 & \textbf{89.1} & \textbf{93.1} & \textbf{85.1} & 90.4 & 82.4 \\
& AUC & 90.3 & 89.7 & 87.9 & 94.2 & 85.6 & 90.2 & 94.7 & \textbf{97.7} & \textbf{92.4} & \textbf{96.3} & 90.4 \\
\midrule
\multirow{2}{*}{64}
& ACC & 82.3 & 82.7 & 80.8 & 89.5 & 76.9 & 81.1 & 87.6 & 91.4 & 83.2 & 90.1 & 80.9 \\
& AUC & 89.8 & 91.2 & 89.2 & 93.9 & 85.4 & 90.0 & 94.3 & 97.0 & 92.1 & 95.9 & 90.7 \\
\midrule
\multirow{2}{*}{128}
& ACC
& \cellcolor{iceblue}\textbf{82.8}
& \cellcolor{iceblue}\textbf{83.5}
& \cellcolor{iceblue}\textbf{81.6}
& \cellcolor{iceblue}90.1
& \cellcolor{iceblue}\textbf{77.5}
& \cellcolor{iceblue}81.6
& \cellcolor{iceblue}86.0
& \cellcolor{iceblue}91.1
& \cellcolor{iceblue}83.5
& \cellcolor{iceblue}\textbf{90.7}
& \cellcolor{iceblue}82.9 \\

& AUC
& \cellcolor{iceblue}\textbf{90.9}
& \cellcolor{iceblue}\textbf{91.9}
& \cellcolor{iceblue}\textbf{90.0}
& \cellcolor{iceblue}\textbf{94.3}
& \cellcolor{iceblue}86.7
& \cellcolor{iceblue}88.7
& \cellcolor{iceblue}93.4
& \cellcolor{iceblue}97.2
& \cellcolor{iceblue}91.8
& \cellcolor{iceblue}95.8
& \cellcolor{iceblue}91.4 \\
\bottomrule
\end{tabular}%
}
\end{table*}

\section{Evaluation Benchmarking}
\label{a-b}
To provide a comprehensive benchmark, we evaluate our approach against a diverse array of competitive baselines, including F3Net~\cite{f3net}, Face X-Ray~\cite{face-x-ray}, RECCE~\cite{recce}, and SBIs~\cite{sbi}, LaV~\cite{lav}, CFM~\cite{cfm}, IID~\cite{iid}, UCF~\cite{ucf}, FakeRadar~\cite{fakeradar}, LSDA~\cite{lsda}, FCG~\cite{fcg}, ProDet~\cite{prodet}, VbSaT~\cite{vbsat}, DeepShield~\cite{deepshield}, $\chi^2$-DFD~\cite{x2dfd}, FreqDebias~\cite{freqdebias}, ExposDe~\cite{exposde}, FSgD~\cite{fsgd}, RAID~\cite{raid}, UDD~\cite{udd}, DFFake~\cite{dffake}, ICT~\cite{ict}, CDFA~\cite{cdfa}, and Effort ~\cite{effort}. For a fair and rigorous comparison, all models are trained on the FaceForensics++ (c23) dataset and evaluated across multiple unseen datasets and manipulation techniques to assess their cross-domain generalization.

\begin{table*}[t]
\centering
\renewcommand{\arraystretch}{0.92}
\setlength{\tabcolsep}{3.8pt}
\caption{Ablation study on layer-wise loss weighting. \emph{Uniform Weighting} assigns identical loss weights to both HSP layers, whereas \emph{Hierarchical Weighting} reduces the loss weights of the second HSP layer by half. Results are reported in frame-level ACC and AUC (\%), with the best result for each dataset and metric shown in \textbf{bold}.}
\label{tab:exp-ablation-loss-weight}
\resizebox{0.9\textwidth}{!}{%
\begin{tabular}{cc|ccccccccccc}
\toprule
\textbf{Weighting Strategy} & \textbf{Metric} & \textbf{CD2} & \textbf{CD1} & \textbf{DFDCP} & \textbf{DFD} & \textbf{DFDC} & \textbf{SimSwap} & \textbf{UniFace} & \textbf{e4s} & \textbf{FaceDancer} & \textbf{FSGAN} & \textbf{InSwap} \\
\midrule
\multirow{2}{*}{Uniform} & ACC & 82.2 & \textbf{83.8} & \textbf{81.8} & \textbf{92.2} & 76.8 & 80.7 & 85.1 & 89.8 & \textbf{83.5} & 90.6 & 82.1 \\
& AUC & 90.6 & 91.8 & 88.9 & 93.7 & 85.9 & 87.9 & 92.7 & 97.0 & 91.7 & 95.5 & 90.2 \\
\midrule
\multirow{2}{*}{Hierarchical} & ACC & \textbf{82.8} & 83.5 & 81.6 & 90.1 & \textbf{77.5} & \textbf{81.6} & \textbf{86.0} & \textbf{91.1} & \textbf{83.5} & \textbf{90.7} & \textbf{82.9} \\
& AUC & \textbf{90.9} & \textbf{91.9} & \textbf{90.0} & \textbf{94.3} & \textbf{86.7} & \textbf{88.7} & \textbf{93.4} & \textbf{97.2} & \textbf{91.8} & \textbf{95.8} & \textbf{91.4} \\
\bottomrule
\end{tabular}%
}
\end{table*}

\section{Robustness to Image Degradations}
\label{a-g}
% \subsection{Robustness to Image Degradations}
\par To evaluate the robustness of HSP against common image degradations, we adopt the evaluation protocols from previous studies~\cite{lips,effort,lsda,df1.0}. Specifically, we examine five distinct types of perturbations: Color Saturation Change, JPEG Compression, Color Contrast Change, Block-wise Distortion, and Gaussian Blur. For each perturbation, severity levels ranging from 0 (clean) to 5 are applied to assess model stability under varying degrees of degradation. All models are trained on FaceForensics++ (c23) and evaluated based on video-level AUC scores. We compare our method against several state-of-the-art approaches, including Face X-ray~\cite{face-x-ray}, SBI~\cite{sbi}, LipForensics~\cite{lips}, and FWA~\cite{fwa}. Additionally, for a strictly fair comparison, we reproduce F3Net~\cite{f3net}, UCF~\cite{ucf}, and CORE~\cite{core}, ensuring they utilize \textbf{the same data augmentation scheme} as our proposed method. The quantitative results under these unseen perturbations are illustrated in Figure~\ref{fig:exp-robustness}. As observed, while most competing methods suffer significant performance drops under severe perturbations, particularly in JPEG compression and Gaussian blur where high-frequency details are lost, our method maintains consistently high performance across all severity levels. Even compared to the reproduced baselines~\cite{f3net, ucf, core} trained with identical data augmentation, HSP-CLIP demonstrates superior stability, validating that its robustness stems from the learned generalized features rather than merely data augmentation.

\section{Additional Ablation Studies}
\label{a-d}
\subsection{Impact of the Number of Prototypes $K$}
We vary the number of prototypes within $\{8,16,32,64,128\}$, and the results are shown in Table~\ref{tab:exp-ablation-prototypes}. On cross-dataset benchmarks (\textit{e.g.}, CD2, CD1, DFDCP, and DFDC), performance generally improves with a larger prototype space, achieving the best results at $K=128$. In contrast, under cross-manipulation evaluation (\textit{e.g.}, DF40), performance peaks at $K=32$ and slightly decreases with further expansion. This trend is consistent with the experimental conclusions reported in the main text.

\begin{figure*}[t]
    \centering
    \includegraphics[width=0.96\linewidth]{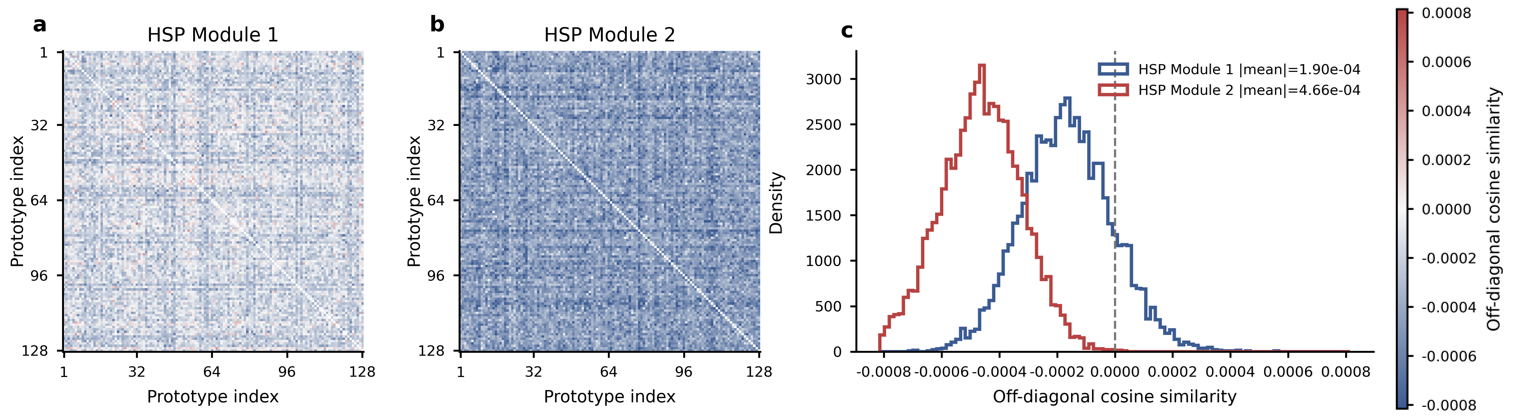}
    \caption{Cosine similarity analysis of the learned spatial prototypes in the two HSP modules ($K=128$). (a,b) Prototype cosine similarity matrices, where the off-diagonal entries are concentrated near zero. (c) Distributions of off-diagonal cosine similarities, with mean absolute values of $1.90\times10^{-4}$ and $4.66\times10^{-4}$ for Modules 1 and 2, respectively. These near-zero similarities demonstrate that HSP learns geometrically decoupled prototypes with minimal redundancy.}
    \label{fig:exp-prototype-diversity}
\end{figure*}

\begin{figure*}[t]
    \centering
    \includegraphics[width=0.98\linewidth]{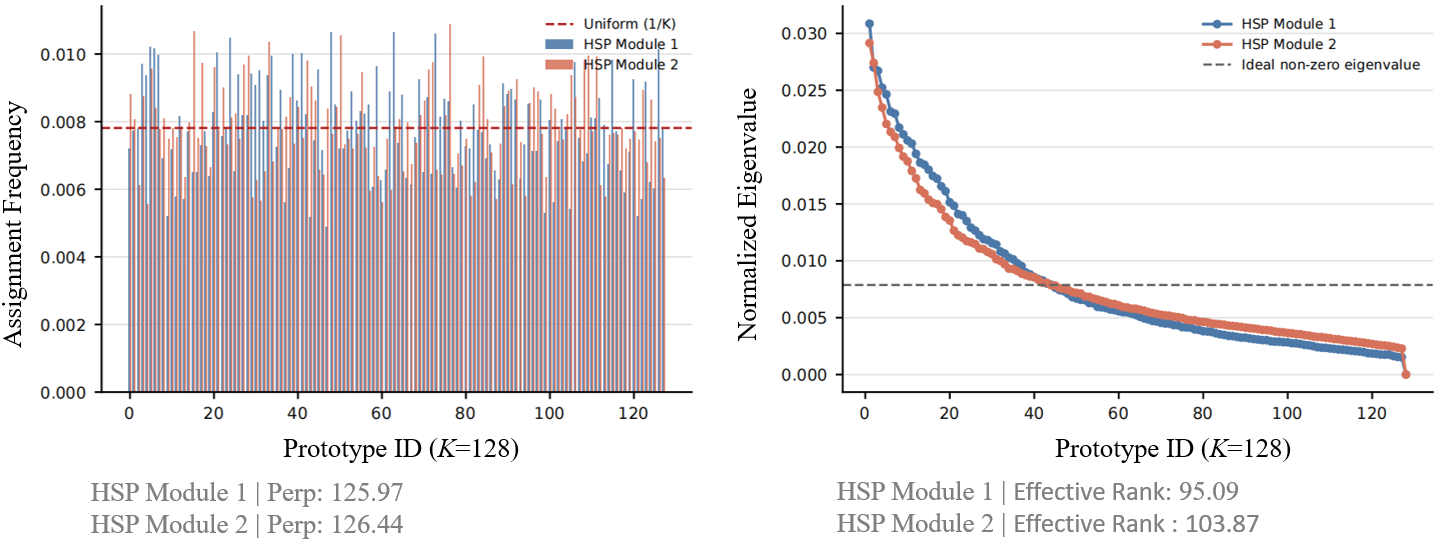}
    \caption{Prototype assignment balance across the two HSP modules ($K=128$) on CDF-v2. The left panel shows the prototype-usage frequencies, with the red dashed line denoting the uniform baseline ($1/K$); both modules achieve \textit{Perplexity} scores close to the theoretical maximum of 128. The right panel shows the normalized eigenvalue spectra of the centered assignment covariance, where the effective ranks reach 95.09 and 103.87 and approach the maximum admissible value of 127, indicating broad utilization of the admissible assignment subspace.}
    \label{fig:exp-assignment-balance}
\end{figure*}

\subsection{Impact of Hierarchical Loss Weighting}
To examine whether different HSP layers require different regularization strengths, we compare two loss-weighting strategies. \emph{Uniform Layer-wise Weighting} assigns identical weights to all HSP layers, while \emph{Hierarchical Layer-wise Weighting} reduces the loss weights of the second layer by half. As shown in Table~\ref{tab:exp-ablation-loss-weight}, hierarchical weighting achieves more consistent performance across most benchmarks, while uniform weighting is only beneficial on a few datasets. This behavior stems from the hierarchical nature of forensic representations. Earlier layers preserve fine-grained forgery cues, such as texture inconsistencies and blending boundaries, and are therefore more suitable for factor modeling and decoupling. In contrast, deeper layers encode more global semantic information, where excessive reconstruction and decoupling constraints may restrict their discriminative capacity. Reducing the regularization strength of the second HSP layer thus preserves high-level semantics while maintaining structured modeling of fine-grained cues. This observation is consistent with the layer-placement results in Figure~\ref{sec5-fig2}(b), where deeper features are less suitable for fine-grained factor decoupling.

\section{Analysis of Internal Properties Explaining HSP Effectiveness}
\label{a-e}
\subsection{Prototype Diversity Analysis.}
We assess the diversity of the learned spatial prototypes under the dual-layer HSP configuration with $K=128$. For each module, we compute the cosine similarity matrix $\mathbf S=\hat{\mathbf P}\hat{\mathbf P}^{\top}$, where $\hat{\mathbf P}$ is obtained by row-wise $\ell_2$ normalization. As shown in Figure~\ref{fig:exp-prototype-diversity}(a,b), both similarity matrices exhibit clear identity-like structures, with diagonal entries of 1.0 and off-diagonal values concentrated near zero. Figure~\ref{fig:exp-prototype-diversity}(c) further quantifies this behavior: the mean absolute off-diagonal cosine similarities are only $1.90\times10^{-4}$ and $4.66\times10^{-4}$ for Modules 1 and 2, respectively. These results confirm that the orthogonality constraint learns nearly independent prototype directions with minimal redundancy, effectively preventing prototype collapse.

\subsection{Prototype Assignment Analysis. }
To validate the \textit{assignment balance} constraint, we randomly sample 2,000 real and 2,000 fake images from CDF-v2 and extract token-to-prototype soft assignments from the two HSP modules with $K=128$. We evaluate assignment balance from two complementary perspectives. First, we average the assignments over all tokens to obtain the marginal prototype-usage distribution $\bar{\mathbf a}$ and measure its uniformity using Perplexity:
\[
\mathrm{Perp}
=
\exp\left(
-\sum_{k=1}^{K}
\bar{a}_k\log(\bar{a}_k+\epsilon)
\right),
\]
whose maximum is $K$ under perfectly uniform usage. Second, we center the assignment matrix as $\mathbf A_c=\mathbf A-\mathbf 1\bar{\mathbf a}^{\top}$ and compute the normalized eigenvalues of $\mathbf A_c^{\top}\mathbf A_c$ as $\lambda_k=\sigma_k/\sum_j\sigma_j$. We then quantify the diversity of assignment variations using the effective rank:
\[
r_{\mathrm{eff}}
=
\exp\left(
-\sum_{k=1}^{K}
\lambda_k\log(\lambda_k+\epsilon)
\right).
\]
Because each assignment vector sums to one, the centered assignments lie in a $(K-1)$-dimensional subspace, yielding a maximum admissible effective rank of $K-1=127$.

As shown in Figure~\ref{fig:exp-assignment-balance}, the prototype-usage frequencies of both modules closely follow the uniform baseline $1/K$. Module 1 and Module 2 achieve Perplexity scores of $125.97$ and $126.44$, respectively, approaching the theoretical maximum of $128$ and indicating broad prototype utilization. Their effective ranks reach $95.09$ and $103.87$, corresponding to $74.8\%$ and $81.8\%$ of the admissible maximum, respectively. Both values approach the theoretical maximum of $127$, particularly for Module 2, indicating that assignment variations span a broad set of valid directions, thereby enabling the model to learn a richer set of latent forgery-feature directions.
% To validate the effectiveness of our Assignment Balance constraint, we investigate the usage distribution of spatial prototypes. Specifically, we randomly sample 2,000 real and 2,000 fake images from CDF-v2 and extract token-to-prototype soft assignments from the two HSP modules ($K=32$). We quantify assignment uniformity using Perplexity (Perp), defined as the exponentiated entropy of the assignment distribution, and KL divergence with respect to the ideal uniform distribution ($1/K$). Ideally, a perfectly balanced assignment achieves zero KL divergence and maximum Perplexity of $K$. As illustrated in Figure~\ref{fig:exp-assignment-balance}, both HSP modules closely follow the uniform baseline. Quantitatively, Module 1 and Module 2 achieve KL divergences of 0.0502 and 0.0377, with Perplexity scores of 30.57 and 30.83, respectively, approaching the theoretical maximum of 32. These results confirm that assignment balance effectively prevents prototype collapse and encourages diverse utilization of the learned forgery factors.

\begin{figure*}[t]
    \centering
    \includegraphics[width=0.85\linewidth]{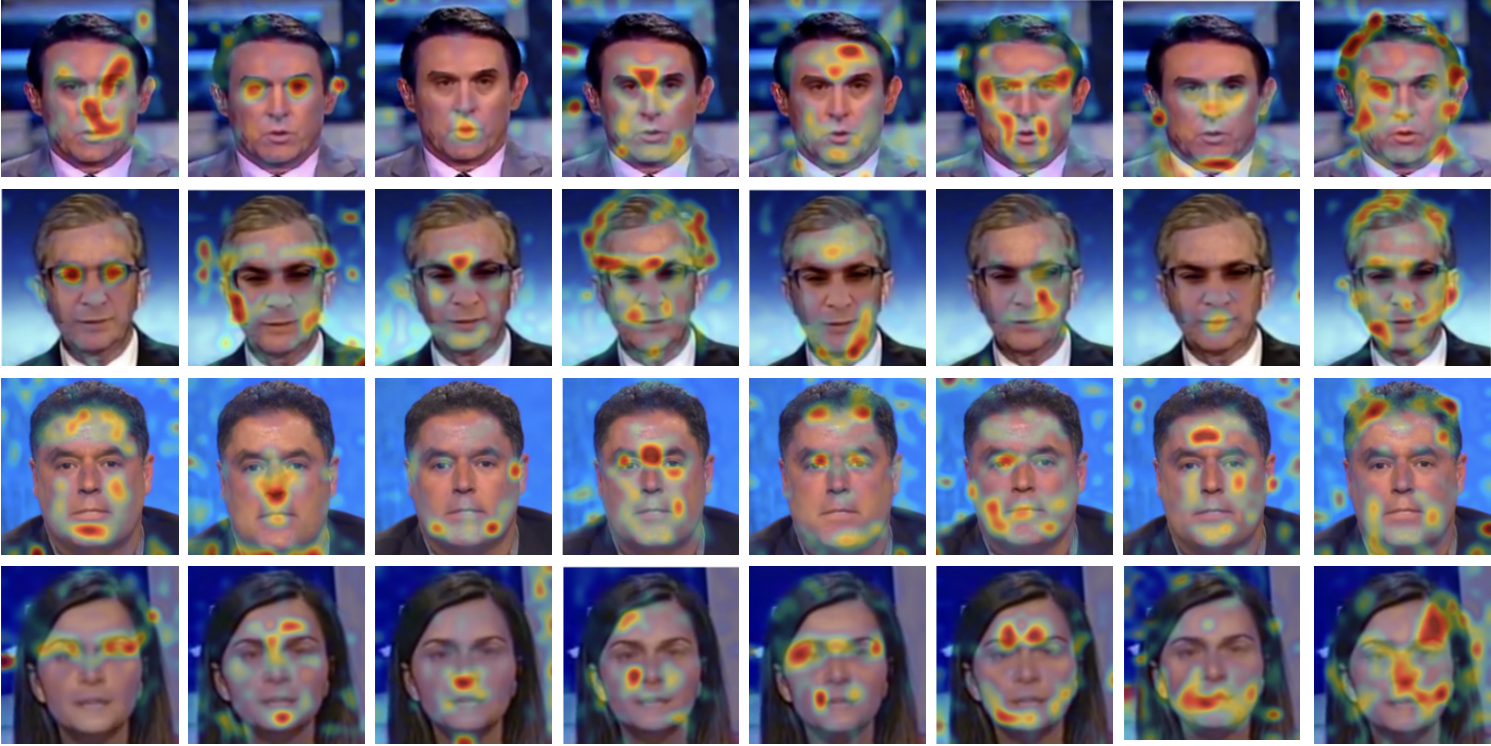}
    \caption{Visualization of spatial prototype activation maps. We visualize the bilinearly upsampled token-to-prototype soft assignments for 8 selected prototypes (columns) across various input faces (rows). Notably, each prototype isolates distinct and consistent localized regions (\textit{e.g.}, eyes, mouth, or blending boundaries) regardless of the subject's identity. This demonstrates that the HSP framework successfully disentangles forgery artifacts into diverse, independent structural patterns, mitigating over-reliance on localized shortcuts.}
    \label{fig:exp-activation-map}
\end{figure*}

\subsection{Visualization of Spatial Prototype Activation Maps.}
To understand the representations learned by HSP, we visualize the spatial activation maps of optimized prototypes. Specifically, we extract token-to-prototype assignment maps from the second HSP module, select 8 representative prototypes, and upsample their assignment weights to the original resolution. As shown in Figure~\ref{fig:exp-activation-map}, different prototypes exhibit distinct spatial preferences across images, with some focusing on localized artifacts around the eyes or mouth and others capturing global facial structures or blending boundaries. This spatial disentanglement validates that orthogonality and balance constraints encourage the decomposition of complex forgery evidence into diverse and independent structural patterns.

\section{Limitation}
Our method improves cross-domain generalization by using learnable spatial prototypes to model forgery representations through multiple latent factors. However, this factorization is driven by representation-level constraints rather than explicit supervision for specific forensic cues. Consequently, the learned prototype factors are not guaranteed to align with physically interpretable manipulation traces, such as blending boundaries, frequency inconsistencies, or rendering artifacts. Future work could introduce finer-grained supervision over different dimensions of the prototype space, for example, through manipulation-specific auxiliary objectives or forgery-region localization~\cite{lav,face-x-ray}, to improve the interpretability and semantic alignment of the learned factors.

% Check whether the conference requires a reproducibility checklist to be included in the paper.
% If so, you can uncomment the following line and ajust the path to include it.
% \input{ReproducibilityChecklist.tex}

\end{document}